\newtheorem{theorem}{Theorem}
\newcolumntype{P}[1]{>{\centering\arraybackslash}p{#1}}
\newcolumntype{L}[1]{>{\raggedright\arraybackslash}p{#1}}
\newcolumntype{C}[1]{>{\centering\arraybackslash}p{#1}}
\newcolumntype{R}[1]{>{\raggedleft\arraybackslash}p{#1}}
\newcolumntype{D}{ >{\centering\arraybackslash}  }
\tikzset{block1/.style={draw=white,rounded corners,minimum height=.cm,align=center},
	line/.style={-latex}}
\tikzset{block2/.style={text width=4cm,minimum height=1cm,align=center},
	line/.style={-latex}}
\tikzset{block0/.style={draw,rounded corners,align=center, thick, minimum height=.7cm,minimum width={width("AAA")+2pt}},
	line/.style={-latex}}
\tikzset{blockcircle/.style={draw,circle,align=center, thick, minimum height=.7cm,minimum width={width("AA")+2pt}},
	line/.style={-latex}}
\tikzset{blockS/.style={draw=blue,align=center, thick, minimum height=.7cm,minimum width={width("AAA")+2pt}},
	line/.style={-latex}}
\tikzset{blockE/.style={draw=orange,align=center, thick, minimum height=.7cm,minimum width={width("AAA")+2pt}},
	line/.style={-latex}}
\tikzset{
	path image/.style={
		path picture={
			\node at (path picture bounding box.center) {
				\includegraphics[height=2cm]{hinhchen/ECGsample}};}},
}
\theoremstyle{definition}
\theoremstyle{remark}
\numberwithin{equation}{section}
\begin{document}


\title[Q-MARL for large-scale multi-agent reinforcement learning]{Q-MARL: A quantum-inspired algorithm using neural message passing for large-scale multi-agent reinforcement learning}


\author{Kha Vo}
\address{Independent Researcher}
\email{khahuras@gmail.com}
\urladdr{https://khavo.ai} 


        \author{Chin-Teng Lin}
        \address{University of Technology Sydney (UTS)}
        \email{chin-teng.lin@uts.edu.au}



\begin{abstract}
	Inspired by a graph-based technique for predicting molecular properties in quantum chemistry -- atoms' position within molecules in three-dimensional space -- we present Q-MARL, a completely decentralised learning architecture that supports very large-scale multi-agent reinforcement learning scenarios without the need for strong assumptions like common rewards or agent order. The key is to treat each agent as relative to its surrounding agents in an environment that is presumed to change dynamically. Hence, in each time step, an agent is the centre of its own neighbourhood and also a neighbour to many other agents. Each role is formulated as a sub-graph, and each sub-graph is used as a training sample. A message-passing neural network supports full-scale vertex and edge interaction within a local neighbourhood, while a parameter governing the depth of the sub-graphs eases the training burden. During testing, an agent's actions are locally ensembled across all the sub-graphs that contain it, resulting in robust decisions. Where other approaches struggle to manage 50 agents, Q-MARL can easily marshal thousands. A detailed theoretical analysis proves improvement and convergence, and simulations with the typical collaborative and competitive scenarios show dramatically faster training speeds and reduced training losses.

\end{abstract}


\maketitle

\section{Introduction}
\noindent The recent success of reinforcement learning (RL) in sophisticated cooperative/ competitive strategy games, such as DOTA \cite{DOTA2018}, StarCraft II \cite{starcraft_2019}, ATARI \cite{atari}, AlphaGo \cite{go_2016, go_2017}, and poker \cite{poker_2017, poker_2018} has raised the question of whether these algorithms can be generalised into scenarios involving a much larger number of agents. The main challenge is that the size of the joint space of actions in multi-agent RL (MARL) scenarios increases exponentially with each new agent. So, at hundreds or thousands of agents, finding optimal individual policies for each agent to complete their objectives becomes computationally infeasible. Moreover, training any single agent without taking the behaviour of other agents into account would violate the compulsory Markov property assumption of the environment's stationarity \cite{survey2, survey5, survey10}. From this violation, a cascade of other problems follows, including suboptimal credit assignments \cite{survey32}, imbalanced exploitation/exploration \cite{survey8}, combinatorial explosions \cite{survey347}, and policy overfitting \cite{survey35}. Hence, despite the aid of advanced machine learning techniques like deep neural networks \cite{survey30}, designing algorithms to solve the non-stationarity of MARL remains the pivotal problem in RL \cite{survey29}. \\

\noindent In MARL, non-stationarity problems are not straightforward to address, especially once the number of agents exceeds just a few dozens. For example, combinatorial explosion means that one cannot naively view the joint state and action space of all agents as a single space and then implement single-agent algorithms. This also precludes the use of central controller-like algorithms that distribute actions to agents \cite{Lowe17, Gupta17, Omidshafiei17, Foerster16}. Some researchers have attempted to override the Markov stationarity property by making a few assumptions about the environment, such as a common reward system \cite{Wang02, Arslan17}. However, with too many agents, the state and action spaces become too large to learn trustworthy policies and can even fail to converge to a local optimum \cite{Lauer00, Littman01}.\\

  \noindent In competitive and collaborative MARL scenarios, the actions of agents are correlated. The degree of influence these correlations have is governed by the \textit{neighbourhood} information each agent contains. For instance, in a simulated soccer game, the action of a striker with the ball is strongly affected by the surrounding agents, i.e., the opponent's defenders and the teammate's midfielders, but weakly affected by the agents farther away, i.e., the teammate's goalkeeper and defenders. Similarly, in a cyber-security scenario, connected computers form a graph network, where a malicious attack must find the intermediate agents on its way to the final computer target. These examples demonstrate that focusing on appropriate information is more efficient than using all  information that can possibly be retrieved from the environment.\\

\subsection{Related Works} \label{sec:related}
\subsubsection{Issues in Centralised Learning}

\noindent The collaborative and competitive nature of MARL leads to a handful of difficult challenges, the most notable of which is the possibility of an extremely large joint state/action space. In fact, early research \cite{Lauer00, Littman01} on value-based approaches found this problem to be intractable at a certain number of agents. In looking for solutions, some researchers turned to assumptions about the environment. For example, Wang and Sandholm \cite{Wang02} assumed a common reward system for all agents so the value function could be estimated locally (see also \cite{Arslan17}). This simple assumption makes training trivial; however, it also violates the stationarity characteristic required for RL \cite{survey2, survey5, survey10}.\\

\noindent Deep learning has also been explored for answers by embedding neural networks into the policy learning process as a viable function approximator \cite{Lowe17, Gupta17, Omidshafiei17, Foerster16}. The general approach here is to implement a common reward system \cite{Omidshafiei17, Gupta17} to control each agent's learning with a central manager \cite{Foerster16, Lowe17}. However, combinatorial explosion prevents all of these approaches from scaling to large environments. Hence, to reduce the joint state/action spaces and speed up the learning process, some frameworks strike a balance between full-scale and local training by allowing communication between agents. A full-scale training using the attention mechanism for collaborative tasks with two policy networks in an end-to-end architecture was proposed and named ATOC \cite{Jiang18}. This work is an extension of the multi-agent deep deterministic policy gradient (MADDPG) \cite{Lowe17}. However, in MADDPG, the authors also assumed the existence of fully connected agents in the communication network. TarMAC \cite{Das18} provides an architecture for targeted continuous communication via soft attention that allows agents to select other appropriate agents to communicate with at each time step. Both methods show that incorporating the attention mechanism improves performance with complete centralised small-scale scenarios but, again, large-scale scenarios are still beyond computational limits. 

\subsubsection{Agent Communication}

 One typical approach that aims to achieve a balance between full-scale training and local training to accelerate the process is to allow partial communication between agents. Communication in RL is an emerging field of research for collaborative tasks in a partially observable environment. Reinforced inter-agent learning (RIAL) and differentiable inter-agent learning (DIAL) were proposed in \cite{162} to discover the communication protocols between agents. These methods employ a neural network to determine each agent's Q values, as well as the communication message to all other agents in the next time step. As a result these methods work by centralised learning and decentralised execution. The centralised learning is not feasible in large-scale scenarios, i.e, the authors of RIAL/DIAL only experimented on small scenarios with up to 4 agents. Memory-driven multi-agent deep deterministic policy gradient (MD-MADDPG) \cite{165} was built on top of the classic MADDPG \cite{Lowe17} by adding a shared memory as a communication protocol across agents. This shared memory is processed by each agent before taking an action, and is dependent on each agent's private observation. Similar to RIAL/DIAL, MD-MADDPG also lacks a mechanism to deal with large-scale environment, because the training flow will explode if the number of agents exceeds a few dozens. The authors of MD-MADDPG only conducted experiments in toy problems with up to 2 agents. Similarly, Dropout-MADDPG \cite{177} was inspired by the dropout technique in supervised learning \cite{dropout} and MADDPG, but also suffered from the \textit{curse of agent count}. Although the dropout mechanism implemented on communication messages had some positive effects such as improved training speed and steady-state performance, it cannot compensate the great drawback of the infeasibility in large scenarios, as all experiments only involved less than 10 agents. CommNet  \cite{Sukhbaatar16} provides continuous communication through a simple broadcasting channel. In this strategy, an averaged message from all agents in one network layer is used as input for the next layer. However, although CommNet has been shown to work well in collaborative tasks, it does not consider competitive scenarios or scenarios with heterogeneous agents. Designed as a robust approach to combat and capture games, and StarCraft in particular, BiCnet \cite{Peng17} is a vectorized extension of the actor-critic approach that employs a bi-directional recurrent communication framework. However, since full-scale training is required, feasible scenarios are limited to a maximum of 50 agents.\\

\subsubsection{Approaches for big-scale scenarios}

Of the impact-based strategies to tackle large scenarios, mean-field MARL \cite{Yang18b} approximates the mean effect of neighbouring agents. This method has two variant, one policy-based (mean-field actor-critic) and the other value-based (mean-field Q-learning). Although applicable to large-scale scenarios with homogeneous agents, this approach only considers the influence of neighbours at a very broad level; the nuanced information is lost. Therefore, in simple scenarios with a diverse range of heterogeneous agent types, such as speaker-listener \cite{Lowe17}, the mean-field approach has difficulty representing the dominant influence of the speaker. \\

\noindent Jaques et al. \cite{Jaques18} is the other impact-based framework. This solution involves thoroughly assessing social influence across all agents by comprehensively simulating the alternative actions each agent could have taken. Actions with the greatest influence over other agents are detected and encouraged later in the process. This strategy is good for predicting degrees of influence across diverse agent types, but it does not establish a strong connection between the causal prediction and the real target. In other words, the causal analysis is unsupervised and can be prone to failure in scenarios where the probability of successful actions during training are low. \\  

\noindent More recently, there have been some attempts to integrate the concept of local neighbourhoods into MARL through attention \cite{Jiang20, Agarwal19} and relevance graph embedding \cite{Malysheva18}. Since these approaches are typical to be compared with our proposed method with regards to the relatively large number of agents, we will go into more details with some experimental comparison in Section \ref{sec:compare}. \\

\subsection{Inspiration from Quantum Chemistry}
Predicting molecular properties is regarded as one of the most challenging problems in quantum chemistry \cite{nmp}, in part because the atomic symmetry of molecules demands a graph-based mechanism that can deal with the isomorphic invariance of an atom's position in three-dimensional space. Example problems have even been used as the basis of world-class machine learning competitions, with thousands of entrants across the globe \cite{CHAMPSKaggle}. As a team who finished in the gold-medal zone of one of these competitions in 2019 (rank 10/2749) \cite{CHAMPSKaggle}, we were intrigued to find some interesting parallels between quantum chemistry and MARL as follows:

\begin{itemize}
	\item Consider that the atoms in a molecule in quantum chemistry are equivalent to the agents in an interactive game in MARL. In both scenarios, the relative pair-wise distances between entities are crucial.
	\item Rotational invariance and graph isomorphism are equivalent and are key, in that randomly rotating a molecule is equivalent to changing the coordinate origin in a MARL game, and placing entities in an equivalent symmetric way should give the same outcome.
	\item The number of entities is not fixed, rather it differs given the context.
	\item The influence of one entity over another is governed by proximity.
	
\end{itemize}
These similarities prompted us to wonder whether state-of-the-art methods in quantum chemistry might be state-of-the-art for MARL. However, transposing a working concept from quantum chemistry to MARL poses multiple technical challenges. The first being how to reformulate a supervised learning problem with a fixed dataset in quantum chemistry into an RL problem to be solved by self-playing with multiple agents given infinite simulating data.\\

%

\begin{figure}
	\includegraphics[width=\linewidth]{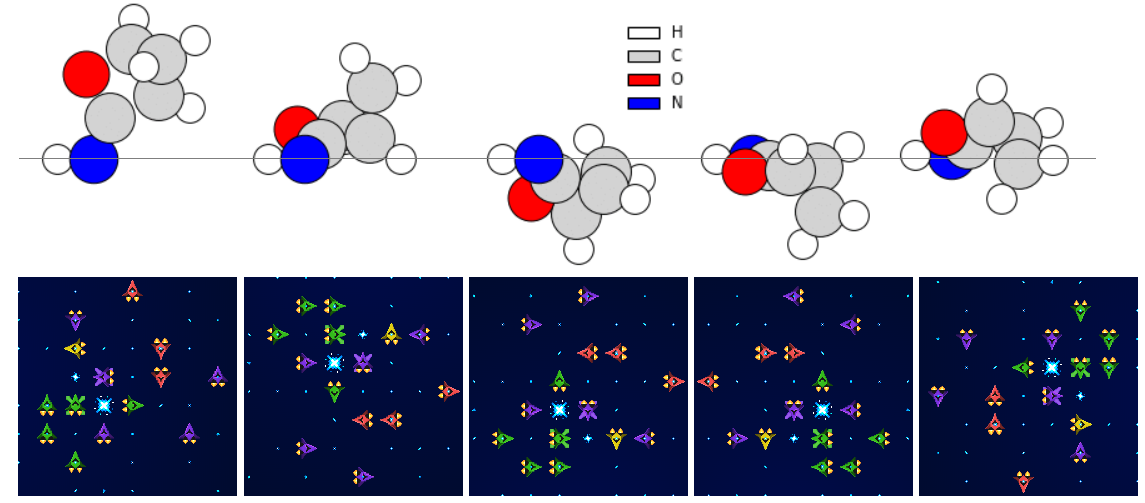}
	\caption{Inspiration for MARL from quantum chemistry. Top row: illustrations of five differently placed counterparts of a same molecule $C_4H_5ON$. The \textit{scalar coupling constant} \cite{CHAMPSKaggle} between any pair of atoms in the molecule is invariant with respect to the coordination of the molecule. Bottom row: illustration of five differently placed scenes of the same relative structure in a 4-player Halite game \cite{Halite, Halite4}. The optimal action of each agent should be invariant with respect to the scene's rotation/symmetry.}
	\label{fig:QC}
\end{figure}

\noindent In this paper, we propose a framework called Q-MARL (Quantum-MARL), to fully decentralise learning, inspired by state-of-the-art works from quantum chemistry. Our graph-based approach is to address large-scale MARL in joint collaborative-competitive scenarios. The model is capable of capturing nuanced influence across agents within a neighbourhood by efficiently decomposing the large original environment into sub-graphs. The key characteristic of our approach is the invariance of the model with respect to the rotation or isomorphism of graphs representing neighborhoods of agents. At each time step, each agent is able to observe only its own information, i.e., state and reward, and the information of agents in its neighbourhood. Stated differently, the neighbouring information forms a time-varying graph, where each vertex is an agent and an edge between two agents indicates a temporary neighbouring bond between them. The agents do not necessarily form a full graph in which any vertex can be accessed by any vertex. The time-varying property of the graph guarantees convergence to global optimality while substantially reducing the size of the problem. Our graph-based approach addresses this problem by not discriminating each agent's identity. A message-passing mechanism is used as the main backbone of the graph network, and all vertices and edges can fully interact with each other in a sub-graph. By accumulating the gradient of each specific agent in all sub-graphs that it belongs to, training becomes more stable and converges faster. The graph network is coupled with a $\lambda-step$ actor-critic RL algorithm \cite{AC} and is deployed in a decentralized fashion, i.e., at test time, each agent acquires its own information and the neighbourhood information, forms a graph, and executes an action from the trained model without the existence of a central manager. \\

\subsection{Research Gaps and Contributions}

Given the literature review in Section \ref{sec:related}, the following research gaps that need to be fulfilled are summarized as follows.
\begin{itemize}
	\item Previous works that proposed decentralized training schemes in big-scale scenarios only use shared model parameters with respect to each neighborhood, thus actions of agents within a neighborhood are independent. This is not desired. There lacks a centralized training approach within each decentralized neighborhood.
	\item To the best of our knowledge, there were no works dealing with the graph's rotation and isomorphism to tackle overestimation and improve generalization in MARL. We were inspired by this idea from quantum chemistry where molecules' rotational invariance and graph isomorphism play an important role in model design. The advent of this innovative idea also tailors with the research gap made in the previous point:  a dynamic graph representing a network of agents with varying size/topology/positions is the most generalized and can be viewed as a centralized fashion within each neighborhood.
	\item There lacks an approach to fully centralize each local neighborhood sample. Suppose after decentralizing the full scenario into overlapped neighborhoods, the action of each agent is desired to be influenced by the actions all other agents it communicates to. This is not the case in most works in literature.
\end{itemize}

The contributions of our work can be generally summarized as follows.
\begin{itemize}
	\item Formulate a general MARL scenario as a graph-based problem, where homogeneous agents' policies can be employed in a decentralized fashion. Convergence to global optimality is guaranteed and proved by assuming a time-varying graph transition.
	\item  Propose a graph-based model based on a message-passing neural network mechanism, where all vertices and edges can interact with each other. The rotational invariance of the input vertices is a crucial factor for the convergence analysis mentioned in the first point.
	\item Implement our proposed method in typical multi-agent scenarios, with comparison to recent state-of-the-art algorithms in the literature. Our results suggest that given an existing algorithm, the convergence speed is drastically enhanced by using the sub-graph embedding, and the decentralized framework is preserved with promising global reward performance. Our method is therefore well tailored to large-scale competitive and collaborative multi-agent tasks. 
\end{itemize}

\noindent The rest of this paper is organised as follows. Section \ref{sec:theory} presents theoretical aspects of the work, including the convergence analysis of policy gradient in MARL (Section \ref{sec:theory1}), graph-based formulation of the problems (Section \ref{sec:theory2}), and neural message passing architecture (Section \ref{sec:architecture}). The experiment scenarios and results are presented in Section \ref{sec:scenarios} and Section \ref{sec:mainresult} respectively, followed by the comparison with other studies in Section \ref{sec:compare}. Finally, Section \ref{sec:conclusion} draws key brief conclusion points. \\

\section{Graph-Based Multi-Agent Reinforcement Learning}\label{sec:theory}
\subsection{Background and Problem Setting}\label{sec:theory1}
   Consider an environment with $N$ agents, each having a finite state space $\mathcal S^i$ and a finite action space $\mathcal A^{i}$. The joint state and action spaces of all agents are denoted as $\mathcal S = \Pi_{i}\mathcal S^{i}$ and $\mathcal A = \Pi_{i}\mathcal A^{i}$, respectively. In each time step $t$, agent $i$ is in an instantaneous state $S_t^{i}$, performs action $A^{i}_t$ drawn from its current policy $\pi^{i}(a^i|s)$ which is parametrised by $\bm\theta^{i}$, then transitions to a new state $S_{t+1}^i$ and receives a reward $R^{i}_{t+1}$.  From a global perspective, the joint state and joint action of all agents at time step $t$, are denoted as $S_t = (S_t^{1}, \dots, S_t^{N})$ and $A_t = (A_t^{1}, \dots, A_t^{N})$, respectively. \\
   
   \noindent The parametric policy for each agent is formulated as
   \begin{equation}
   \pi^i(a^i|s) = \mathbb P\big\{ A_t^i=a^i \; \big | \; S_t=s, \; \bm\theta^i\big\}
   \end{equation}
   This policy indicates the probability of agent $i$ taking action $a^i$ given the joint state $s$ and the agent's policy parameters $\bm\theta^i$ at time step $t$.  This formulation expresses the multi-agent scenario as a single-agent scenario, with the joint state and action at each time step as the state of action of a single agent. The sample of joint return at time step $t$ can be formulated simply as the averaged return of all agents, i.e., $R_t = \frac{1}{N}\sum_{i}R_t^{i}$.  The optimal joint policy $\tilde{\bm\theta} = (\tilde{\bm\theta}^1, \dots, \tilde{\bm\theta}^N)$ is obtained by maximizing the expectation of the long-term discount reward 
   \begin{equation}
    \tilde{\bm\theta} = \text{maximize}_{\bm\theta} \lambda(\bm\theta) \doteq \mathbb E[G_t | \bm\theta ] 
   \end{equation}
   where $G_t = \sum_{t=0}^{\infty} \gamma^{t}R_{t+1}$ denotes the instantaneous long-term reward and $\gamma \in (0,1]$ is a discount factor influencing the impact of near-future actions. We hereby present the extended policy gradient in a multi-agent environment. 

\subsection{Theoretical Foundations}\label{sec:theory2}
\begin{theorem}\label{theorem_1} The gradient of the global reward 
$\lambda(\bm\theta)$ with respect to any local policy parameter $\bm\theta^{i}$ of agent $i$ can be derived as
\begin{equation}
\nabla_{\bm\theta^{i}}\lambda(\bm\theta) = \mathbb E_{\pi } \Big[ G_t  \nabla_{\bm\theta^{i}}  \ln \pi^{i} (A_t^{i} | S_t )  \Big],
\end{equation}
where $\mathbb E_\pi$ denotes the expectation of the random variables inside the brackets following the joint policy $\pi$.
\end{theorem}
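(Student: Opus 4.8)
The plan is to reduce the multi-agent statement to the classical single-agent policy gradient identity by exploiting the conditional-independence structure of the joint policy. Since at each step the agents draw their actions independently given the joint state $s$, the joint policy factorises as $\pi(a\mid s)=\prod_{j=1}^{N}\pi^{j}(a^{j}\mid s)$, hence $\ln\pi(a\mid s)=\sum_{j}\ln\pi^{j}(a^{j}\mid s)$, and because only the $i$-th factor carries the parameters $\bm\theta^{i}$,
\begin{equation}
\nabla_{\bm\theta^{i}}\ln\pi(a\mid s)=\nabla_{\bm\theta^{i}}\ln\pi^{i}(a^{i}\mid s).
\end{equation}
This is the only genuinely multi-agent ingredient; everything else is the standard derivation applied to the ``single super-agent'' whose state is $S_t$ and whose action is $A_t$, exactly as the problem setting in Section~\ref{sec:theory1} invites.

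First I would write the objective over trajectories. With $\tau=(S_0,A_0,S_1,A_1,\dots)$ and
\begin{equation}
\mathbb P(\tau\mid\bm\theta)=\rho(S_0)\prod_{t\ge 0}\pi(A_t\mid S_t)\,p(S_{t+1}\mid S_t,A_t),
\end{equation}
where $\rho$ is the initial-state distribution and $p$ the (parameter-free) transition kernel, we have $\lambda(\bm\theta)=\mathbb E_{\tau\sim\mathbb P(\cdot\mid\bm\theta)}[G_0]$ with $G_t=\sum_{k\ge 0}\gamma^{k}R_{t+k+1}$. Applying the log-derivative identity $\nabla_{\bm\theta^{i}}\mathbb P(\tau\mid\bm\theta)=\mathbb P(\tau\mid\bm\theta)\,\nabla_{\bm\theta^{i}}\ln\mathbb P(\tau\mid\bm\theta)$ and using $\nabla_{\bm\theta^{i}}\ln\rho(S_0)=0$ and $\nabla_{\bm\theta^{i}}\ln p(S_{t+1}\mid S_t,A_t)=0$, the trajectory score collapses, by the factorisation above, to $\sum_{t}\nabla_{\bm\theta^{i}}\ln\pi^{i}(A_t^{i}\mid S_t)$. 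Then I would invoke the standard causality argument — the score at time $t$ is uncorrelated with rewards collected strictly before time $t+1$, so $G_0$ may be replaced by the tail return $\gamma^{t}G_t$ inside the $t$-th summand, and the $\gamma^{t}$ weights reassemble into the discounted on-policy state-visitation measure that defines $\mathbb E_{\pi}$ — to obtain
\begin{equation}
\nabla_{\bm\theta^{i}}\lambda(\bm\theta)=\mathbb E_{\pi}\!\Big[\,G_t\,\nabla_{\bm\theta^{i}}\ln\pi^{i}(A_t^{i}\mid S_t)\,\Big].
\end{equation}
An equivalent route is the Bellman-recursion proof of Sutton et al.\ \cite{pg}: differentiate $V^{\pi}(s)=\sum_{a}\pi(a\mid s)Q^{\pi}(s,a)$, unroll over the discounted state-visitation measure, and substitute $\nabla_{\bm\theta^{i}}\pi(a\mid s)=\pi(a\mid s)\nabla_{\bm\theta^{i}}\ln\pi^{i}(a^{i}\mid s)$; I would keep the trajectory version as the primary argument since it makes the role of the product policy most transparent.

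The main obstacle is analytic bookkeeping rather than anything conceptual. One must justify interchanging $\nabla_{\bm\theta^{i}}$ with the infinite-horizon expectation over trajectories, which requires $\gamma<1$ together with bounded rewards (or a suitable integrability hypothesis) so that dominated convergence applies; and one must be careful that, although the law of every $S_t$ with $t\ge 1$ does depend on $\bm\theta^{i}$ through agent $i$'s past actions, the transition kernel $p$ itself does not, so no spurious terms arise when differentiating $\ln\mathbb P(\tau\mid\bm\theta)$. I would also make explicit which sampling distribution $\mathbb E_{\pi}$ denotes — states drawn from the ($\gamma$-)discounted visitation measure, actions from $\pi$ — since the one-time-step expression in the statement is shorthand for the time-summed quantity produced by the derivation.
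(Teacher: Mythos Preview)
Your proposal is correct and follows essentially the same approach as the paper: both exploit the factorisation $\pi(a\mid s)=\prod_j\pi^j(a^j\mid s)$ so that $\nabla_{\bm\theta^i}\ln\pi=\nabla_{\bm\theta^i}\ln\pi^i$, reducing the claim to the classical single-agent policy gradient identity. The only difference is presentational---the paper invokes the Sutton et al.\ result \cite{pg} directly, whereas you re-derive it via the trajectory likelihood-ratio argument (and then note the Bellman-recursion route as an alternative), adding some welcome rigour about interchanging differentiation and expectation that the paper omits.
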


\begin{proof}
	Conventional policy gradient (PG) theorem \cite{pg} states that the gradient of the long-term discounted return with respect to the joint policy $\bm\theta$ is
	\begin{equation}\label{Eq:PG}
	\begin{aligned}
	\nabla \lambda(\bm\theta) & = \sum_{s\in \mathcal S}d_\pi(s)\sum_{a\in \mathcal A} q_\pi(s, a)\nabla \pi(a|s) \\ &  = \mathbb E_{\pi} \bigg[ G_t \frac{\nabla\pi(A_t|S_t)}{\pi(A_t|S_t)} \bigg].
		\end{aligned}
		\end{equation}
	 The first equality is derived as the original result of PG \cite{pg}, while the second equality holds by replacing the variables $s$ and $a$ with stochastic samples $S_t$, $A_t$ and obtaining the expectation, as explained in \cite{RL_book}. Since by definition $\pi(A_t|S_t) = \Pi_i \pi^{i}(A_t^{i} | S_t)$, then from (\ref{Eq:PG}), the proof is straightforward by taking the gradient with respect to only local parameters $\bm\theta^i$, given the fact that all individual policies are conditionally independent, i.e., $\nabla_{\bm\theta i}\ln \pi^j (A_t^j|S_t) = 0 \;\; \forall j \neq i$, and $ \frac{\nabla_{\bm x}f(\bm x)}{f(\bm x)} = \nabla_{\bm x}\ln f(\bm x) \; \forall f$.
\end{proof}

\noindent Here, we introduce two widely used terms in RL - the \textit{state-value} function $v_{\pi}(s)  \doteq \mathbb E_{\pi}[G_t |S_t=s]$  and the \textit{action-value} function $q_{\pi}(s,a) \doteq \mathbb E_{\pi}[G_t  |  S_t=s,A_t=a]$. The state-value function expresses the expectation of long-term reward in a specific state $s$  following policy $\pi$,  while the action-value function expresses the expected long-term reward when starting in state $s$, performing action $a$, and following $\pi$. 
The policy gradient theorem of graph-based MARL suggests that the gradient of the global reward with respect to the local policy parameter $\bm\theta^{i}$ of agent $i$ is independent of the other agents' policies and requires only the unbiased estimate $G_t$ of $q_{\pi}$. However, in our decentralised graph-based MARL we do not have access to $G_t$ during training. Therefore, each agent is coupled with a local estimate $\hat q^{i}(s, a)$, which is parametrized by $\bm w^{i}$. Further details are provided in Section \ref{sec:architecture}.

\begin{theorem}\label{Theorem:2}
An agent's policy $\bm \theta ^{i}$ can be updated locally via the following procedure
\begin{equation}
\bm\theta^{i} \leftarrow \bm\theta^{i} + \alpha\delta^{i}\nabla_{\bm\theta^{i}}\ln\pi^{i}(A^{i}_t | S_t),
\end{equation}
where $\delta^{i} =\hat q^{i}(S_t, A_t) - v(S_t, A_t^{(-i)})$ and $A_t^{(-i)}$ denote the actions of all agents who are independent of agent $i$ at time step $t$.
\end{theorem}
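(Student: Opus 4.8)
The plan is to run the classical actor--critic argument with a (counterfactual) baseline, starting from the policy-gradient identity of Theorem \ref{theorem_1}. First I would replace the unbiased return estimate $G_t$ in Theorem \ref{theorem_1} by the local critic $\hat q^{i}(S_t,A_t)$. Under a compatible parametrisation of $\hat q^{i}$ (or, more loosely, treating $\hat q^{i}$ as a sufficiently accurate estimate of $q_{\pi}$ so that the substitution is unbiased), this yields
\begin{equation}
\nabla_{\bm\theta^{i}}\lambda(\bm\theta) = \mathbb E_{\pi}\Big[\hat q^{i}(S_t,A_t)\,\nabla_{\bm\theta^{i}}\ln\pi^{i}(A_t^{i}|S_t)\Big],
\end{equation}
which is already expressible in terms of quantities available to agent $i$ and its neighbourhood.

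Next I would subtract the baseline $v(S_t,A_t^{(-i)})$ and show it contributes nothing to the gradient. The point is that $v(S_t,A_t^{(-i)}) = \mathbb E_{\pi}[G_t\,|\,S_t,A_t^{(-i)}]$ is, by construction, a measurable function of $S_t$ and $A_t^{(-i)}$ only, hence independent of $A_t^{i}$. Using that the individual policies are conditionally independent given the joint state -- the same fact exploited in the proof of Theorem \ref{theorem_1}, which in particular gives $A_t^{i}\perp A_t^{(-i)}\,|\,S_t$ -- together with the elementary identity
\begin{equation}
\mathbb E_{\pi}\big[\nabla_{\bm\theta^{i}}\ln\pi^{i}(A_t^{i}|S_t)\,\big|\,S_t\big] = \sum_{a^{i}}\pi^{i}(a^{i}|S_t)\,\nabla_{\bm\theta^{i}}\ln\pi^{i}(a^{i}|S_t) = \nabla_{\bm\theta^{i}}\!\sum_{a^{i}}\pi^{i}(a^{i}|S_t) = 0,
\end{equation}
the tower property gives $\mathbb E_{\pi}\big[v(S_t,A_t^{(-i)})\,\nabla_{\bm\theta^{i}}\ln\pi^{i}(A_t^{i}|S_t)\big] = 0$: condition first on $(S_t,A_t^{(-i)})$, pull the baseline out of the inner expectation, then drop $A_t^{(-i)}$ from the conditioning by the conditional independence above, and apply the vanishing-score identity.

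Combining the two displays yields $\nabla_{\bm\theta^{i}}\lambda(\bm\theta) = \mathbb E_{\pi}\big[\delta^{i}\,\nabla_{\bm\theta^{i}}\ln\pi^{i}(A_t^{i}|S_t)\big]$ with $\delta^{i} = \hat q^{i}(S_t,A_t) - v(S_t,A_t^{(-i)})$, so the claimed update rule is exactly one step of sample-based stochastic gradient ascent on $\lambda(\bm\theta)$ with learning rate $\alpha$; imposing the usual Robbins--Monro step-size conditions on $\alpha$ then delivers the convergence statement used later in the paper. The main obstacle is the baseline step: one must argue carefully that $v(S_t,A_t^{(-i)})$ is genuinely $A_t^{i}$-free and that the conditional expectation of the score factorises as above, both of which rest on the cross-agent conditional independence established in Theorem \ref{theorem_1}; a secondary subtlety is the bias incurred by substituting $\hat q^{i}$ for $G_t$, which I would handle by invoking compatible function approximation or by absorbing it into the standard actor--critic error analysis.
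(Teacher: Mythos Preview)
Your proposal is correct and follows essentially the same route as the paper: start from Theorem~\ref{theorem_1}, replace $G_t$ by the local critic $\hat q^{i}(S_t,A_t)$, and then show that subtracting the baseline $v(S_t,A_t^{(-i)})$ leaves the gradient unchanged via the vanishing-score identity $\sum_{a^{i}}\nabla_{\bm\theta^{i}}\pi^{i}(a^{i}|S_t)=0$. The paper's argument is briefer---it interposes a short motivational remark about the semi-gradient target $R_{t+1}+\gamma v(S_{t+1})$ before introducing $\hat q^{i}$, and it carries out the baseline cancellation directly inside the sum over $a^{i}$ rather than via the tower property---while your version is more explicit about the conditional-independence structure, the compatible-function-approximation caveat, and the Robbins--Monro step-size conditions, none of which the paper spells out.
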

\begin{proof}
From Theorem \ref{theorem_1}, it is evident that the sample $G_t \nabla_{\bm\theta^i}\ln \pi^i(A^i_t|S_t)$ serves as a stochastic estimate following the Monte Carlo gradient ascent algorithm \cite{MC}. With a proper learning rate $\alpha$, the update procedure for $\bm\theta^i$ is
\begin{equation}
\bm\theta^{i} \leftarrow \bm\theta^{i} + \alpha G_t\nabla_{\bm\theta^{i}}\ln\pi^{i}(A^{i}_t | S_t).
\end{equation}
From a general perspective, $G_t$ is $\lambda$ steps in the future and difficult to retrieve. Thus, this value can be substituted with a semi-gradient target $G_t \approx R_{t+1} + \gamma  v(S_{t+1})$ \cite{RL_book}. However, $R_{t+1}$ is still inaccessible locally, and the true global state-value function $v$ is unknown. Thus, an alternative unbiased estimate of $G_t$ must be obtained. To this end, during training, each local agent is coupled with its own action-value network that serves as a critic, denoted by $\hat q ^ i(s, a)$ and parametrized by $\bm w^i$. According to the definition of the action-value function, $\hat q^i$ is a direct unbiased estimate of $G_t$. Moreover, adding a baseline offset quantity $v(S_t, A_t^{(-i)})$ to $G_t$ preserves the update procedure and stabilizes the training process by reducing the target variance since from (\ref{Eq:PG}), we have $\sum_{a^i} v(s, a^{(-i)}) \nabla_{\bm\theta^i} \pi^i (a^i | s) = v(s, a^{(-i)})  \nabla_{\bm\theta^i}\sum_{a^i}\pi^i(a^i|s) =  0$ because $\sum_{a^i} \pi^i(a^i|s) = 1$. 
\end{proof}
\noindent The baseline can be computed by sweeping through all possible actions $a^i$ combined with the instantaneous samples of other agents $A_t^j, \; j\neq i$ as
$$
v^{i}(S_t, {A_t^{(-i)}} ) = \sum_{a^{i}} \pi^{i}(a^{i}|S_t) \; \cdot \;  \hat q^{i}\Big(  S_t, \big(A_t^{(1)}, \dots, A_t^{(i-1)}, a^{i}, A_t^{(i+1)}, \dots, A_t^{(N)}  \big) \Big).
$$

\noindent\textbf{Graph-based Formulation}\\

\noindent According to Theorem \ref{Theorem:2}, the global state $S_t$ must be accessed in each time step to train every individual agent. Yet , this is a difficult task in large-scale MARL scenarios because, as discussed, an agent cannot retrieve the global state during testing due to specific environmental specifications. Decomposing the full environment into smaller sub-environments, each one specific to an individual agent, offers multiple benefits. First, agents require far less but suitable training data in each time step which accelerates the training process, plus the acquired data for each agent is concentrated on the local environment, which means irrelevant or spurious interactions with other uncorrelated agents can be discarded leaving only quality data. Second, each agent's decision can be determined by an ensemble of its actions in $N$ sub-graphs $\mathcal G^i$ during each time step, i.e., from its actions in all the neighbourhoods it belongs to, which makes for better outcome and also helps to stabilise the training process. \\

 \noindent We model the full environment as an undirected graph $\mathcal G (V, E)$ consisting of a set of vertices $V$ and a set of edges $E$. Each vertex in $V$ represents one agent in the environment, and each edge in $E$ is a bidirectional connection between two agents. For each agent, there is one sub-graph $\mathcal G^i$ based on the perspective of agent $i$. All other agents in the neighbourhood of agent $i$ are present in $\mathcal G^i$. As a result, multiple sub-graphs can contain the same agent.\\

\noindent Some modifications to the vanilla RL algorithm have to be made. Without loss of generality, all decomposed sub-graphs have identical roles in learning from the global perspective. In other words, the sub-graphs of one time step in MARL can be thought of in the same way as one-batch data samples in supervised learning. This treatment allows a single model to be built for the whole environment. The model takes each sub-graph as input and stochastically performs back-propagation on each agent in the sub-graph. The model has two major components: the main policy $\pi(a|s)$ and the action-valued critic $\hat q(s, a)$. With a slight abuse of notation,  $\pi$ and $\hat q$ operate only on sub-graphs; hence, the action variable $a$ and state variable $s$ represent the joint action and joint state of all agents in a sub-graph. The output of the model is agent-wise, which means each agent $j$ in sub-graph $\mathcal G^i$ is assigned an action according to a probability policy $\pi (a^i_j| s^i)$. The pseudo-code of our algorithm presented in Algorithm \ref{Alg1}, where we have replaced $s^i$ with $\mathcal G^i$ for notational convenience. The graph decomposition process is illustated in Figure \ref{fig:maingraph}.\\

\begin{algorithm}
	\caption{Graph-Based MARL}\label{Alg1}
	\begin{algorithmic}[1]
		\State Initialize: Graph policy model $\pi(a|s)$ with parameters $\bm\theta$ 
		\State Initialize: Action-value function $\hat q(s, a)$ with parameters $\bm w$
		\State Input: learning rates $\alpha_{\bm\theta}, \alpha_{\bm w}$
		\For {each episode}
		\For {each time step}
		\If {episode end criteria met}: break
		\EndIf
		\State $\mathcal G^i \leftarrow$ decompose$(S)$
		\State  Sample $A^{i}_j \sim \pi(\mathcal G^i) $ for all relevant $j$ in each $\mathcal G_i$
		\State Ensemble action $A^i = \sum_j A^{i}_j$ for each i
		\State Perform actions $A = [A^{i}, \dots, A^{N} ]$ to the environment
		\State Observe $\mathcal G' = [\mathcal G'^1, \dots, \mathcal G'^N], R=[R^{1}, \dots, R^{N}]$
		\For {each $i$}
		\For {each relevant $j$ in $\mathcal G^i$}
		\State $v\leftarrow \sum_{j} \pi(a_{j}|\mathcal G^i) \cdot  \hat q\Big(  \mathcal G^i, \big[A^{i}_1, \dots, a^{i}_j, \dots, A^{i}_{\text{end}}  \big] \Big)$
		\State $v'\leftarrow \sum_{j} \pi(a_{j}|\mathcal G'^i) \cdot  \hat q\Big(  \mathcal G'^i, \big[A^{i}_1, \dots, a^{i}_j, \dots, A^{i}_{\text{end}}  \big] \Big)$
		\State $\delta \leftarrow R^{i} + \gamma v' - v$
		\State $\bm w \leftarrow \bm w +\alpha_{\bm w}\delta \nabla_{\bm w}\hat q(\mathcal G^i, A^i)$
		\State $\bm\theta \leftarrow \bm\theta + \alpha_{\bm\theta} \delta \nabla_{\bm\theta}\ln \pi( A^i_j | \mathcal G^i )$
		\EndFor
		\EndFor
		\EndFor
		\EndFor
	\end{algorithmic}
\end{algorithm}

\begin{theorem}\label{Theorem:3} 
	The ensemble action of each agent $i$ aggregated from all relevant sub-graphs that contain it, denoted as $A^i$ in Algorithm \ref{Alg1}, is statistically improved from any single action $A^i_j$ from any sub-graph that contains agent $i$.
	
\end{theorem}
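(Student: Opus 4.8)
The plan is to read each per-sub-graph decision $A^i_j$ as a stochastic estimator of a single ``target'' action for agent $i$, and to show that the ensemble $A^i$ of Algorithm~\ref{Alg1} --- understood as the normalised aggregate $\tfrac{1}{K}\sum_{j} A^i_j$ over the $K := |\{ j : i \in \mathcal G^j \}|$ sub-graphs that contain $i$ --- is an aggregate of no larger, and generically strictly smaller, variance, hence a statistically better estimator of that target. First I would pin down the common target: fix the time step, condition on the global state $S_t$, and for each $j$ with $i \in \mathcal G^j$ let $A^i_j \sim \pi(\cdot \mid \mathcal G^j)$ be the probability-simplex-valued action that the shared model assigns to agent $i$ inside $\mathcal G^j$ (working with the probability vector output by $\pi$, rather than a sampled categorical value, so that ``variance'' is literally meaningful). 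Because the policy model is homogeneous across agents and, by the rotational-invariance / graph-isomorphism property of the message-passing architecture (Section~\ref{sec:architecture}), agent $i$ occupies a symmetric slot in every sub-graph that contains it, the family $\{A^i_j\}_j$ is exchangeable given $S_t$: the $A^i_j$ share a common conditional mean $\bar A^i := \mathbb E[A^i_j \mid S_t]$, a common conditional variance $\sigma_i^2 := \Var(A^i_j \mid S_t)$, and a common pairwise correlation $\rho_i$; moreover $\bar A^i$ is exactly the quantity toward which the local actor update of Theorem~\ref{Theorem:2} drives each estimator, so it is the natural reference point.

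Second, I would carry out the variance bookkeeping for the average. Exchangeability yields
\begin{equation}
\Var\!\big(A^i \mid S_t\big) \;=\; \frac{1}{K^2}\Big( K\sigma_i^2 + K(K-1)\,\rho_i\sigma_i^2 \Big) \;=\; \sigma_i^2\Big( \rho_i + \tfrac{1-\rho_i}{K} \Big) \;\le\; \sigma_i^2 ,
\end{equation}
with equality only in the degenerate case $\rho_i = 1$. Since $\mathbb E[A^i\mid S_t] = \bar A^i = \mathbb E[A^i_j\mid S_t]$, the mean-squared deviation of $A^i$ from the target $\bar A^i$ is no larger --- and strictly smaller whenever $\rho_i < 1$ --- than that of any single $A^i_j$. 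In the discrete-action reading, where the executed action is the argmax of the accumulated vote vector $\sum_j A^i_j$, the same inequality upgrades to a Condorcet-jury-type statement: provided each single sub-graph decision is better than chance, the probability that the ensemble selects the ``best'' action is non-decreasing in $K$ and dominates that of any individual sub-graph. Either reading gives the precise content of ``$A^i$ is statistically improved from any single $A^i_j$''.

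Third, to express the gain directly in reward terms and match the actor--critic setting, I would compose with the critic $\hat q^i(S_t,\cdot)$, which by the discussion following Theorem~\ref{Theorem:2} is an unbiased and (locally) smooth estimate of the return: a second-order expansion of $\mathbb E[\hat q^i(S_t,\cdot)]$ in agent $i$'s action slot about $\bar A^i$ shows the expected action-value of the ensemble exceeds that of any single $A^i_j$ by a term proportional to $\sigma_i^2 - \Var(A^i\mid S_t) \ge 0$, which closes the argument.

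The step I expect to be the main obstacle is the strict inequality, i.e. justifying $\rho_i < 1$: one must argue that distinct sub-graphs containing agent $i$ genuinely supply non-redundant information, so that the $A^i_j$ are not almost surely equal. I would discharge this by appealing to the time-varying / dynamic neighbourhood graph assumed throughout Section~\ref{sec:theory2} --- generic neighbourhoods around $i$ are non-isomorphic, hence their message-passing embeddings and the induced action distributions differ --- while noting that the \emph{common-mean} half of the exchangeability claim is precisely the rotational/isomorphism invariance the architecture was built to guarantee, so it is available ``for free''. A secondary subtlety is keeping the simplex encoding of discrete actions rigorous enough that the variance identity and the Taylor step are legitimate; restricting all manipulations to the probability vectors produced by $\pi$ (never to sampled labels) resolves it.
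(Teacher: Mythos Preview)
Your core mechanism --- averaging reduces mean-squared error because it kills variance while preserving the mean --- is exactly what the paper uses, so the proposal is on target. The routes differ, however. The paper runs the classical Breiman bagging argument: it posits an external target $y$ (the optimal action), treats the sub-graph $\mathcal G$ as the source of randomness, writes the single-predictor error $e=\mathbb E_{\mathcal G}\mathbb E_{\bm X,Y}[Y-\pi(\bm X,\mathcal G)]^2$ and the ensemble error $e_{\text{ensemble}}=\mathbb E_{\bm X,Y}[Y-\mathbb E_{\mathcal G}\pi(\bm X,\mathcal G)]^2$, and concludes $e\ge e_{\text{ensemble}}$ from the one-line inequality $\mathbb E_{\mathcal G}[\pi^2]\ge (\mathbb E_{\mathcal G}\pi)^2$, with the gap identified as $\mathbb E_{\bm X,Y}[\Var_{\mathcal G}\pi(\bm X,\mathcal G)]$. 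No exchangeability, no correlation coefficient, no Condorcet step, no Taylor expansion through the critic --- just Jensen applied to the sub-graph as random variable.

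Your version is more elaborate and carries one genuine wobble. You condition on $S_t$ and then invoke rotational/isomorphism invariance of the message-passing block to argue that the family $\{A^i_j\}_j$ is exchangeable. But rotational invariance only says that the output at vertex $i$ is insensitive to how the vertices of a \emph{fixed} graph are labelled; it does not make the outputs produced by \emph{structurally different} sub-graphs $\mathcal G^j$ identically distributed, let alone exchangeable. Worse, once you condition on $S_t$ and work with the probability vectors (not sampled labels), each $A^i_j$ is deterministic, so your $\sigma_i^2$ and $\rho_i$ are ill-defined unless you re-introduce randomness over the sub-graph index --- which is precisely the paper's move of making $\mathcal G$ the random object. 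If you adopt that framing, the exchangeable-correlation formula collapses to the paper's Jensen step and your third paragraph (the critic Taylor expansion) becomes unnecessary for the theorem as stated. So: right idea, but drop the exchangeability scaffolding and the appeal to architectural invariance, and compare squared error to an external target with $\mathcal G$ random; that is both simpler and what the paper actually does.
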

\begin{proof}
\noindent Suppose for a specific agent $i$, we draw each scenario circumstance $(y,\bm x)$ from the probability distribution $P$ to form a sub-graph $\mathcal G$ containing agent $i$, where $y$ is the optimal action for agent $i$ and $\bm x$ is the input feature vector of agent $i$ in $\mathcal G$. Let $\pi(\bm x, \mathcal G)$ be the learner's output action that is performed on $\mathcal G$, then subsequently, the ensemble action averaged on all possibilities of $\mathcal G$ can be defined as follows:

\begin{equation}
\pi_{\text{ensemble}}(\bm x, P) = \mathbb E[\pi(\bm x, \mathcal G)].
\end{equation}

\noindent We denote the random variables of $\bm x, y$ as $\bm X, Y$, which are independent of $\mathcal G$. The averaged error $e$ on a weak single action $\pi(\bm x,\mathcal G)$ over all possibilities of $\mathcal G$ can be calculated by 

\begin{equation}
\begin{aligned}
e = \; \; & \mathbb E_{\mathcal G}\mathbb E_{\bm X, Y}[Y-\pi(\bm X, \mathcal G)]^2\\ = \; \; & \mathbb E_{\mathcal G}\mathbb E_{\bm X, Y}[Y^2] - 2\mathbb E_{\bm X, Y}[Y]\mathbb E_{\mathcal G}[\pi(\bm X, \mathcal G)] + \mathbb E_{\bm X, Y} \mathbb E_{\mathcal G}[\pi(\bm X, \mathcal G)]^2 \\  = \;\; & \mathbb E_{\bm X, Y}[Y^2] - 2\mathbb E_{\bm X, Y}[Y\pi_{\text{ensemble}}] + \mathbb E_{\bm X, Y} \mathbb E_{\mathcal G}[\pi(\bm X, \mathcal G)]^2.
\end{aligned}
\end{equation}

\noindent On the other hand, the error produced by $\pi_{\text{ensemble}}$ is computed as

\begin{equation}
\begin{aligned}
e_{\text{ensemble}} = \; \; & \mathbb E_{\bm X, Y}[Y-\pi_{\text{ensemble}}(\bm X, P)]^2\\ = \; \; & \mathbb E_{\bm X, Y}[Y^2] - 2\mathbb E_{\bm X, Y}[Y\pi_{\text{ensemble}}] + \mathbb E_{\bm X, Y} [ \mathbb E_{\mathcal G}\pi(\bm X, \mathcal G)]^2.
\end{aligned}
\end{equation}

 \noindent Since $\mathbb EZ^2 \geq \mathbb E^2 Z \; \forall  Z$ we yield
 
 \begin{equation}
 e\geq e_{\text{ensemble}}
 \end{equation}

 \noindent The gap of $e$ and $e_{\text{ensemble}}$ is dependent on the variance of $\pi_{\text{ensemble}} (\bm X,P)$ as
  
 \begin{equation}
 \begin{aligned}
 e - e_{\text{ensemble}} \geq \; \; & \mathbb E_{\bm X, Y}  \mathbb E_{\mathcal G}[\pi(\bm X, \mathcal G)^2] - \mathbb E_{\bm X, Y}[\mathbb E^2_{\mathcal G}\pi(\bm X, \mathcal G)] \\ = \;\; & \mathbb E_{\bm X, Y}[\text{Var}_{\mathcal G} \pi(\bm X, \mathcal G)].
 \end{aligned}
 \end{equation}
\end{proof}

\noindent Thus, we expect to obtain an improved action by ensembling single actions from the population of sub-graphs $\mathcal G$ drawn from $P$ that contains a specific agent $i$. This is achieved by performing the inference of all agents in each sub-graph after the environment decomposition step, then aggregating the actions of agent $i$ from the sub-graphs it belongs to. 

\begin{figure}[]
	\begin{subfigure}{.9\textwidth}
		\centering
		\includegraphics[width=1\linewidth]{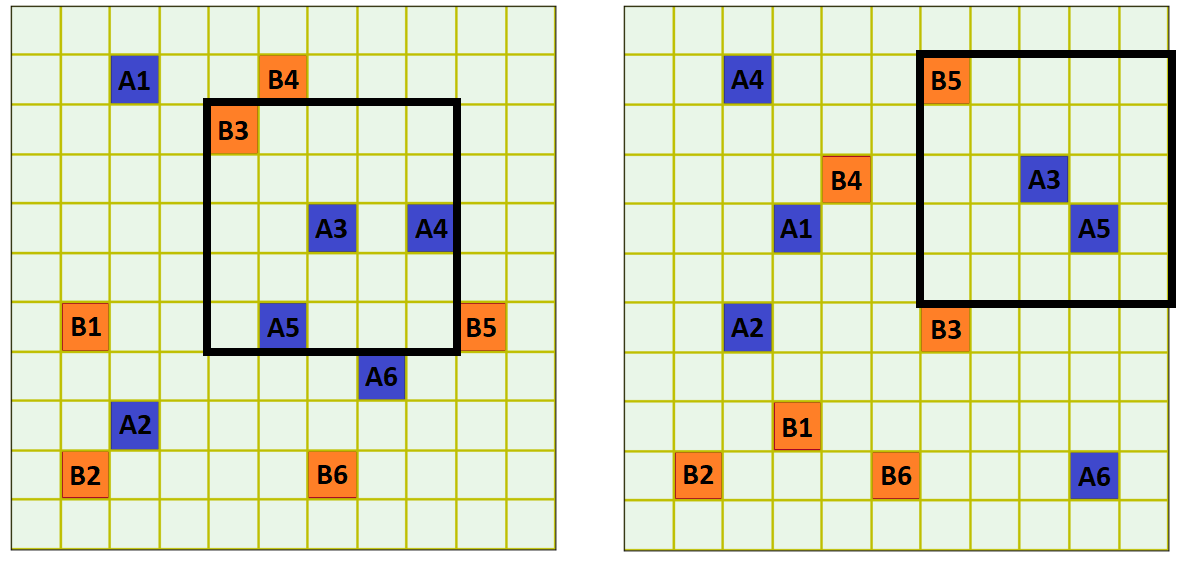}  
		\caption{Scenario Displacement}
		\label{fig:maingrapha}
	\end{subfigure}
	\begin{subfigure}{.9\textwidth}
		\centering
		\includegraphics[width=1\linewidth]{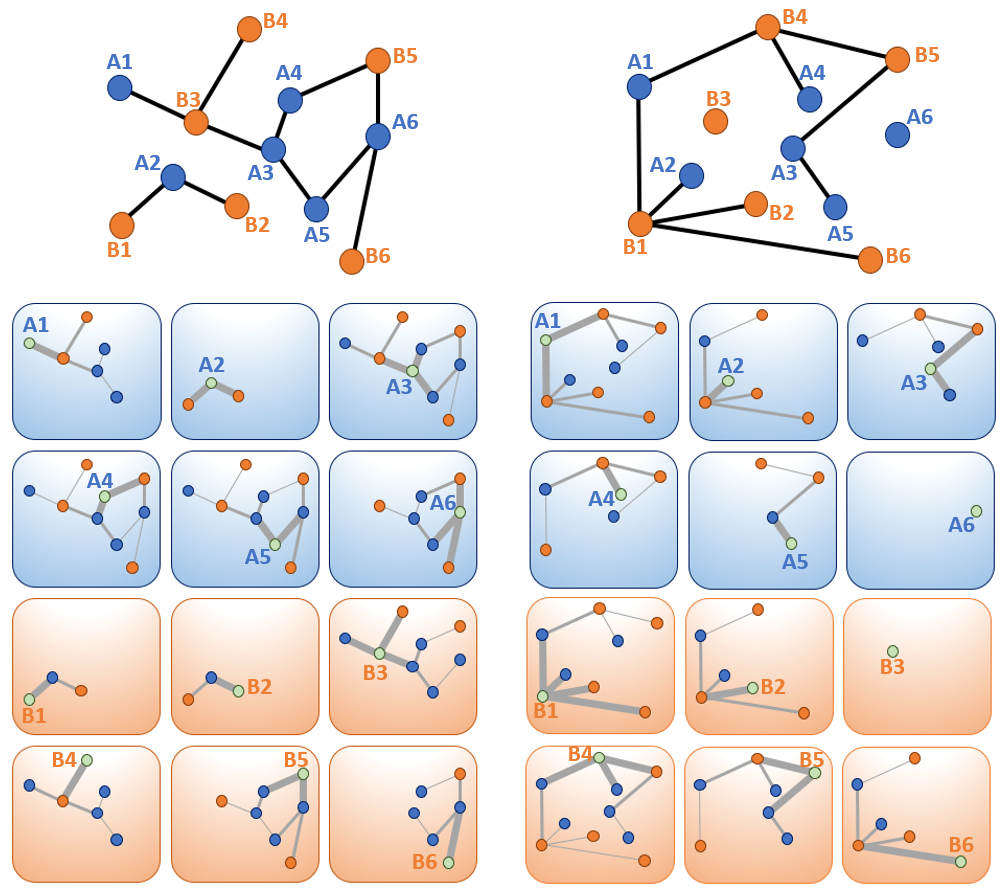}
		\caption{Graph Formulation}
		\label{fig:maingraphb}
	\end{subfigure}
	\caption{An example of how the graph decomposition process differs for two different time steps. The blue and orange vertices represent two homogeneous groups of agents, i.e., two teams. There are 12 sub-graphs for each time step, each formed from the perspective of a different individual agent as indicated by the green vertex, and extending to its 3rd-degree neighbours. This depth of degree, and therefore the complexity of the model, is controlled by a hyperparameter.}
	\label{fig:maingraph}
\end{figure}

\subsection{Neural Message Passing}\label{sec:architecture}
Our graph-based model architecture is inspired by the neural message-passing (NMP) networks of quantum chemistry \cite{nmp}. We argue that a large-scale MARL environment is similar to atomistic space in quantum chemistry, where each agent in a MARL scenario is equivalent to an atom in a molecule, and the neighbouring information across agents is equivalent to the inter-atomistic interactions between atoms. The architecture of our proposed network is illustrated in Figure \ref{fig:architecture}. The feature vector for the state of each agent $i$ in the graph is denoted as $\bm s_i$, and the edge, i.e., the neighbouring information between two agents $i, j$ is denoted as $\bm z_{ij}$. The hidden features of agent $i$ at layer $l$ are dispersed to its neighbours $j \in \mathcal N(i)$ by a vertex update block $\bm{\mathcal V}$ as
\begin{equation}
\bm s_i^{l+1} = \bm{\mathcal{V}} (\bm s_i^l, \bm z_{ij}^l).
\end{equation}
The edge update process for $\bm z_{ij}^l$ is dependent on itself and the resulting updated vertices at its two ends $\bm s_i^{l+1}, \bm s_j^{l+1}$ as
\begin{equation}
\bm z_{ij}^{l+1} = \bm{\mathcal{E}} (\bm s_i^{l+1}, \bm s_j^{l+1}, \bm z_{ij}^l).
\end{equation}

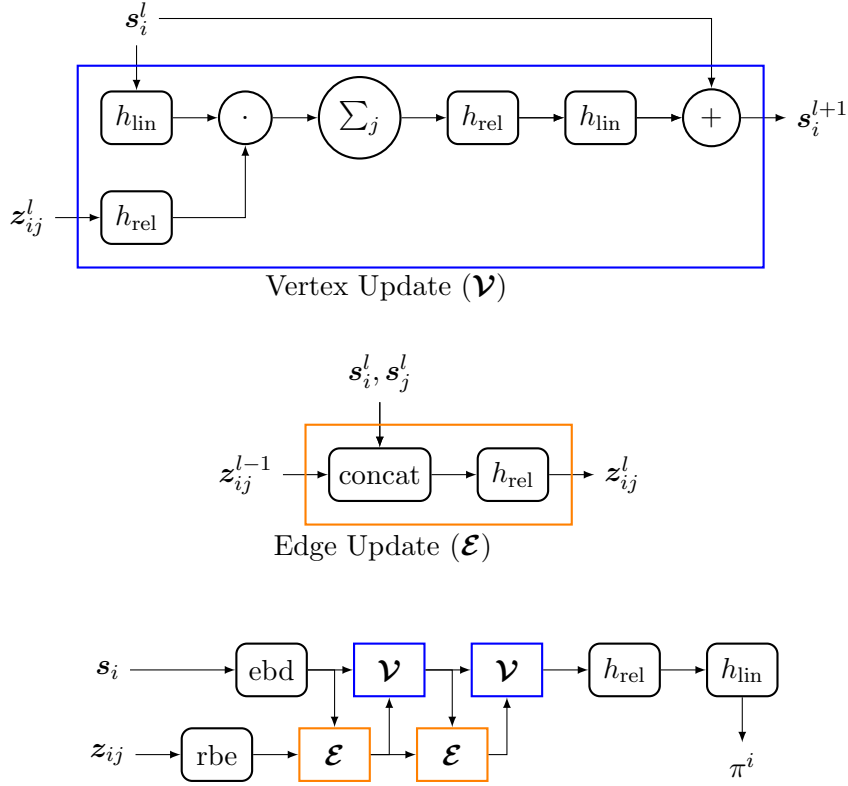
\begin{figure}[t]
	\centering
		\begin{subfigure}{1\textwidth}
		\centering
		\begin{tikzpicture}
		\node[block0] (fc11) {$h_{\text{lin}}$}; 
		\node[block1, above = .6 of fc11] (x) {$\bm s_i^l$};
		\node[block0,  below=.6  of fc11] (fc21) {$h_{\text{rel}}$};
				\node[block1, left=.6 of fc21] (z) {$\bm z_{ij}^l$};
		\node[blockcircle,  right= .6 of fc11] (dot) {.}; 
		\node[blockcircle,  right= .6 of dot] (sum) {$\sum_j$};
		\node[block0,  right=.6  of sum] (fc22) {$h_{\text{rel}}$};
		\node[block0,  right=.6  of fc22] (fc23) {$h_{\text{lin}}$};
	
		\node[blockcircle,  right=.6  of fc23] (sum2) {$+$};		
		\node[block1,  right=.6  of sum2] (out) {$\bm s_i^{l+1}$};
 		\draw[line] (fc21.east)-| node [below, pos=0.25] {} node [above, pos=0.25] {}  (dot);
		\draw[line] (x.east)-| node [below, pos=0.25] {} node [above, pos=0.25] {}  (sum2);
	\draw[line] (x)-- (fc11);
		\draw[line] (z)-- (fc21);
			\draw[line] (fc11)-- (dot);
				\draw[line] (dot)-- (sum);
					\draw[line] (sum)-- (fc22);
						\draw[line] (fc22)-- (fc23);
						\draw[line] (fc22)-- (fc23);
						\draw[line] (fc23)-- (sum2);
						\draw[line] (fc23)-- (sum2);
						\draw[line] (sum2)-- (out);
					\node[draw=blue, thick,inner xsep=3mm,inner ysep=3mm,fit=(fc11)(fc21)(sum2),label={[label distance=-3.9cm,text depth=0,rotate=0]125: Vertex Update  ($\bm{\mathcal{V}}$)}]{}; 		
		\end{tikzpicture}	
			\vspace{.5cm}
		\label{fig:architecture_0}
	\end{subfigure}

\begin{subfigure}{1\textwidth}
	\centering
	\begin{tikzpicture}
	\node[block1] (zij) {$\bm z_{ij}^{l-1}$}; 
\node[block0, right=.6 of zij] (concat) {concat}; 
\node[block1, above  =.6  of concat] (x) {$\bm s_i^l, \bm s_j^l$};
\node[block0, right  =.6  of concat] (fc) {$h_\text{rel}$};
\node[block1, right  =.6  of fc] (zout) {$\bm z_{ij}^l$};
\draw[line] (zij)-- (concat);
\draw[line] (x)-- (concat);
\draw[line] (x)-- (concat);
\draw[line] (concat)-- (fc);
\draw[line] (fc)-- (zout);
\node[draw=orange, thick, inner xsep=3mm,inner ysep=3mm,fit=(concat)(fc), label={[label distance=-2.3cm,text depth=0,rotate=0]125: Edge Update ($\bm{\mathcal{E}}$)}]{}; 
	\end{tikzpicture}	
	\vspace{1cm}
	\label{fig:architecture_b}
\end{subfigure}	

	\begin{subfigure}{1\textwidth}
		\centering
		\begin{tikzpicture}
	\node[block1] (xin) {$\bm s_{i}$}; 
\node[block1, below=.6 of xin] (zin) {$\bm z_{ij}$}; 
\node[block0, right=.6 of zin] (rbf) {rbe}; 
\node[blockE, right=.6 of rbf] (E1) {$\bm{\mathcal{E}}$}; 
\node[blockE, right=.6 of E1] (E2) {$\bm{\mathcal{E}}$}; 
\node[block0, right=1.4 of xin] (ebd) {ebd}; 
\node[blockS, right=.6 of ebd] (S1) {$\bm{\mathcal{V}}$}; 
\node[blockS, right=.6 of S1] (S2) {$\bm{\mathcal{V}}$}; 
\node[block0, right=.6 of S2] (fc1) {$h_{\text{rel}}$}; 
\node[block0, right=.6 of fc1] (fc2) {$h_{\text{lin}}$}; 
\node[block1, below=.6 of fc2] (y) { $\pi^i$}; 	\draw[line] (zin)-- (rbf);
\draw[line] (rbf)-- (E1);
\draw[line] (E1)-- (E2);
\draw[line] (xin)-- (ebd);
\draw[line] (ebd)-- (S1);
\draw[line] (S1)-- (S2);
\draw[line] (S2)-- (fc1);
\draw[line] (fc1)-- (fc2);
	\draw[line] (ebd.east)-| node [below, pos=0.25] {} node [above, pos=0.25] {}  (E1);
		\draw[line] (E1.east)-| node [below, pos=0.25] {} node [above, pos=0.25] {}  (S1);
			\draw[line] (S1.east)-| node [below, pos=0.25] {} node [above, pos=0.25] {}  (E2);
				\draw[line] (E2.east)-| node [below, pos=0.25] {} node [above, pos=0.25] {}  (S2);
				\draw[line] (fc2)-- (y);
		\end{tikzpicture}	
		\label{fig:architecture_c}
	\end{subfigure}	
	\caption{The proposed neural message-passing (NMP) architecture. The full architecture consists of recurring vertex update (V) blocks and edge update (E) blocks. The smaller blocks (fc, rl, concat, ebd, rbe, $\cdot$, and +) are described in Section \ref{sec:architecture}.}
	\label{fig:architecture}
\end{figure}

\noindent The input features for the network can either be the raw properties of each vertex or their spatial positions. In this architecture, the interaction block is crucial to capturing the relative correlation of adjacent vertices. The input states, or observations, of all agents are denoted as $\bm s= (\bm s_i )_{i=1}^{N}$. We define $h_\text{lin}$ and $h_\text{rel}$ as fully connected neural network layers with linear and ReLU \cite{relu} activation functions, respectively. These layers are the backbone for all message-passing blocks in the framework. The weight matrix $\bm W$ of this layer provides a linear combination of input features, biased by $\bm b$ as $$h_\text{lin}(\bm x) =  \bm W\bm x  + \bm b.$$

\noindent In $h_\text{rel}$, a ReLU activation function \cite{relu} is applied to yield
$$h_\text{rel}(\bm x) = \max(0, \bm W \bm x + \bm b).$$

\noindent The summation block $\sum_j$ indicates the sum of all edges $\bm z_{ij}$ in the neighbourhood of $i$.\\

\noindent \textbf{Radial Basis Expansion (rbe):} Since one-hot encoding is the preferred representation for categorical features with neural networks, we argue that a similar formulation is also perferable for the scalar distance. Hence, the distance between two agents $d_{ij}$ is encoded as an $n_{\text{max}}$-dimensional feature vector $\bm z_{ij}$ using a radial basis function, with the $n$-th element as
\begin{equation}
z_{ij, n} =  \exp\bigg( -\frac{(d_{ij}- n\Delta d)^2}{\Delta d} \bigg),
\end{equation}
where $\Delta d$ is the increment step size. Depending on the RL environment setting, we set $n\Delta d$ equal to the maximum view range of all agents, and $n_{\text{max}}=10$ is the chosen dimension.\\

\noindent \textbf{Training Setting:} The Adam optimizer \cite{adam} with an initial learning rate of $0.01$ is implemented. A reduce-on-plateau schedule is used for decaying the learning rate with respect to training progression: the learning rate is decreased by $5\%$ for every $10$ consecutive batches with no reward improvements. \\

\section{Experiments and Results}\label{sec:result}

To verify our theories and assess Q-MARL's performance in situ, we conducted experiments in typical MARL scenarios and compared the results to those in the literature.

\subsection{Scenarios}\label{sec:scenarios}
\noindent The three grid-world MARL scenarios we conducted were commonly-used: \textit{jungle}, \textit{battle}, and \textit{deception} \cite{Lowe17}. In all three scenarios, each agent occupies one cell in a grid-world environment. Overlapping occupants are allowed in any cell. At any time step, agents can apply their actions to the environment and change the relative properties accordingly. The action of an agent are governed by its learned policy, with input states as its local observation, which is a $3\times 3$ rectangle surrounding the cell it occupies. Homogeneous agents share the same policy. Two agents are directly connected in the graph if one is present in the local observation of the other. The action space is common for all three scenarios. At each time step, an agent has 5 actions to choose from: move up, move down, move left, move right, and stay idle.  Actions are only eligible if they do not result in the agent occupying an already-occupied cell, say by a wall or other obstacle. An illustration of the different types of physical interactions between agents and with the environment is provided in Figure \ref{fig:scenarios}. The different rules, goals, and reward schemes assiciated with each of the three scenarios follows. The interactions between agents and rewarding schemes vary across the 3 scenarios are described as follows.\\

\begin{figure}
	\includegraphics[width=\linewidth]{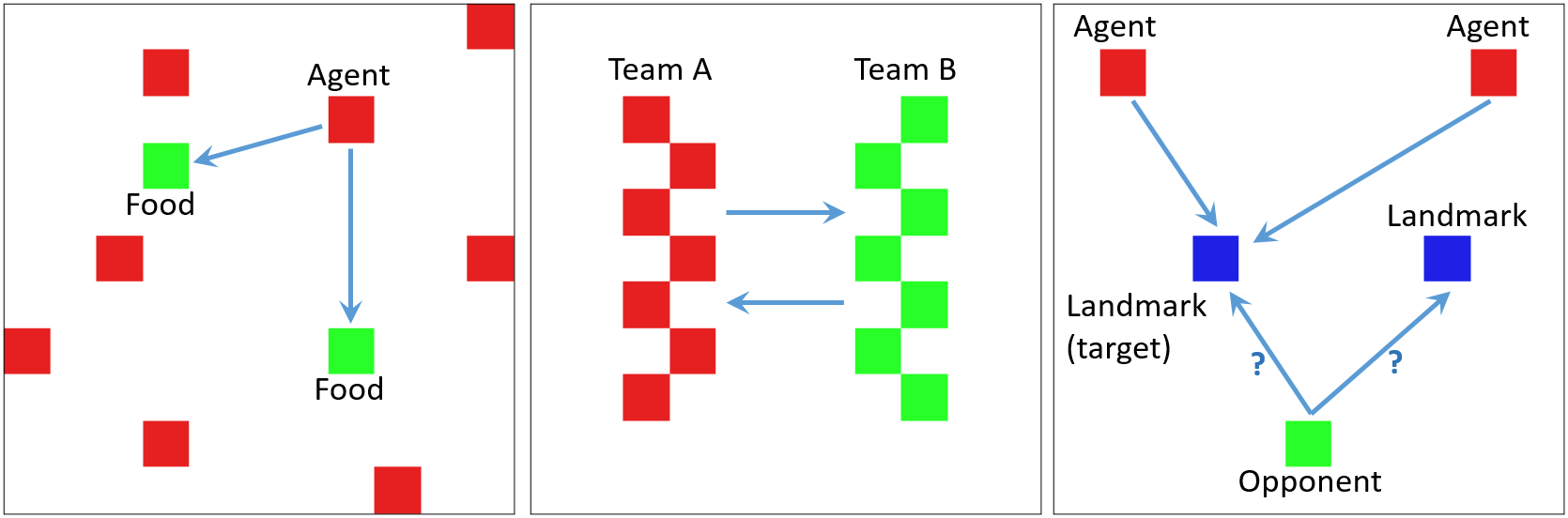}
	\caption{Illustrations of the three MARL scenarios considered in this paper. From left to right: Jungle, Battle, Deception.}
	\label{fig:scenarios}
\end{figure}

\noindent \textbf{Jungle:} In this scenario, agents must search for stationary foods placed in the environment, and they are allowed to attack and kill each other in the process. The objective is to maximize the long-term total amount of food consumed (reward) for all agents. This is ostensibly a collaborative scenario that poses a difficult social-dilemma, where agents must find ways to obtain food without killing each other. The short-term rewards for killing are attractive but detrimental to the long-term rewards. We formulated the specific environment and reward scheme as follows. In each time step, an agent receives an instantaneous reward $+1$ for sitting next to a food cell, and $0$ otherwise. If an agent is adjacent to at least one other agent three times, then it is killed and removed from the environment. The Food cells, however, permanently exist.\\

\noindent \textbf{Battle:} In this scenario, two teams, each with $N$ agents, are placed randomly in the environment. The goal is to have the most agents alive at the end of the episode, pre-defined as a number of time steps. Each agent from a particular team can be killed if surrounded by at least 3 agents from the opposing team. Since this is a collaborative-competitive game, where winning or losing is the ultimate goal, no instantaneous reward is assigned to any agent in any time step. Instead, the reward follows a simple positive scalar of $+1$ when a team wins, and $-1$ when it loses. The main challenge to winning an episode is developing an effective collaboration strategy for connected agents.\\

\noindent \textbf{Deception:} In this scenario, $N$ home team agents and $1$ adversary are randomly placed in the environment, along with a few stationary landmarks, one of which is the target that both teams are trying to reach. The home agents know which landmark is the target, while the adversary does not. No attacking or killing is allowed in this scenario. At the end of an episode, the home team is rewarded $+1$ if both the following requirements hold: the adversary agent has not found the target landmark and there is at least one home agent occupying the target landmark, and $-1$ otherwise. The adversary is co-trained with the home team to achieve its maximized reward of correctly occupying the target landmark (which it does not know). The major challenge in this scenario is that home agents tend to be attracted to the target landmark, therefore, the adversary agent can track the behaviour of the home team to identify that target, resulting in a low reward for the home team. \\






\subsection{Results}\label{sec:mainresult}
\noindent In the Jungle scenario, Q-MARL captured crucial information about the position of food cells and the surrounding situation. Some examples of the typical policies trained are depicted in Figure \ref{fig:animation}. (Note that these plots are from the perspective of one agent simply for demonstration purposes). Overall, the strategy for a team of eight agents was to fix the positions of some agents near known food cells, while others continued searching for more. As the example in the left panel of Figure \ref{fig:jungle-animation} shows, even though the food is not present in Agent 1's local view (the light blue rectangle), the recommended actions (marked as black arrows) encourage it to move away and look for additional food cells. This behaviour comes as a result of aggregating the local views of other agents who have already partially surrounded that food. Agent 2 is encouraged to move toward the food so as to block the remaining open cell, while Agent 3 is also encouraged to continue searching in new directions. The middle and right sub-figures illustrate the next two time steps of the scenario.\\

\begin{figure}[]
	\begin{subfigure}{.9\textwidth}
		\centering
		\includegraphics[width=1\linewidth]{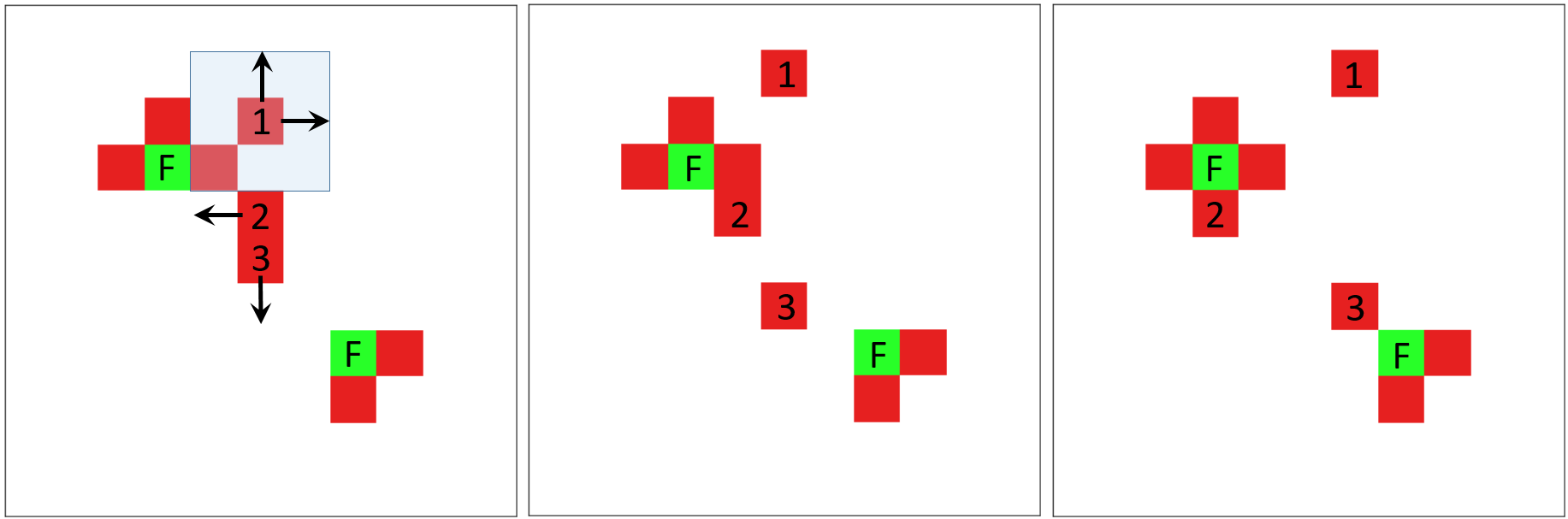}  
		\caption{Jungle}
		\label{fig:jungle-animation}
	\end{subfigure}
	\begin{subfigure}{.9\textwidth}
		\centering
		\includegraphics[width=1\linewidth]{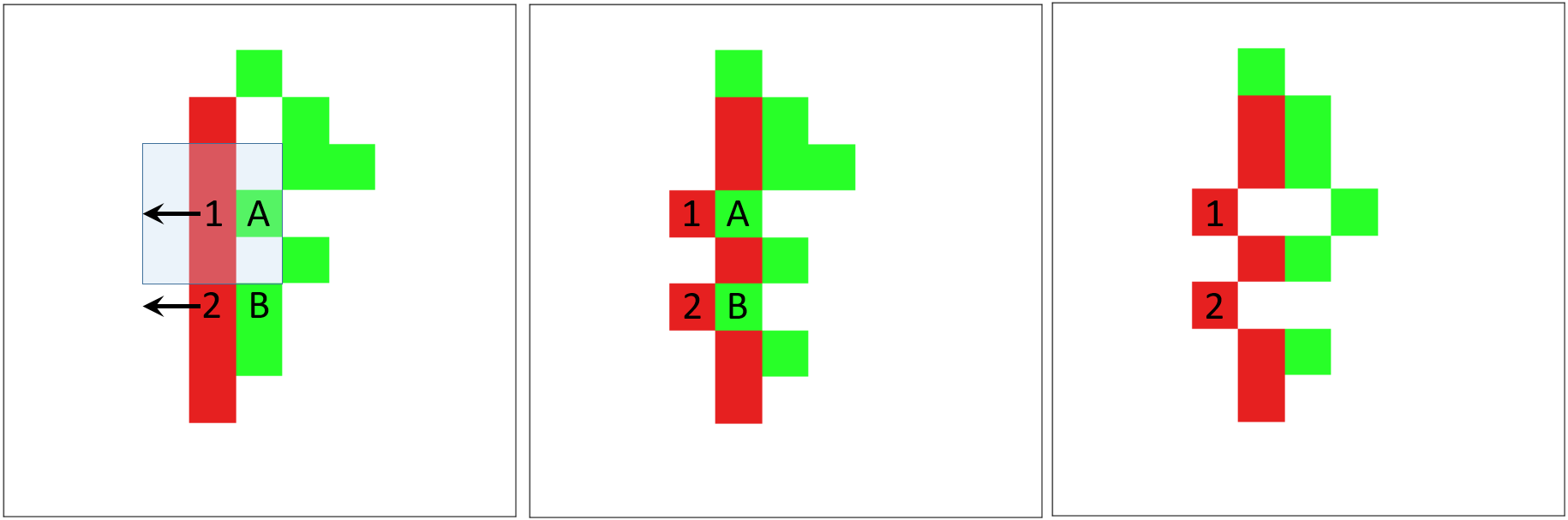}
		\caption{Battle}
		\label{fig:battle-animation}
	\end{subfigure}
	\begin{subfigure}{.9\textwidth}
		\centering
		\includegraphics[width=1\linewidth]{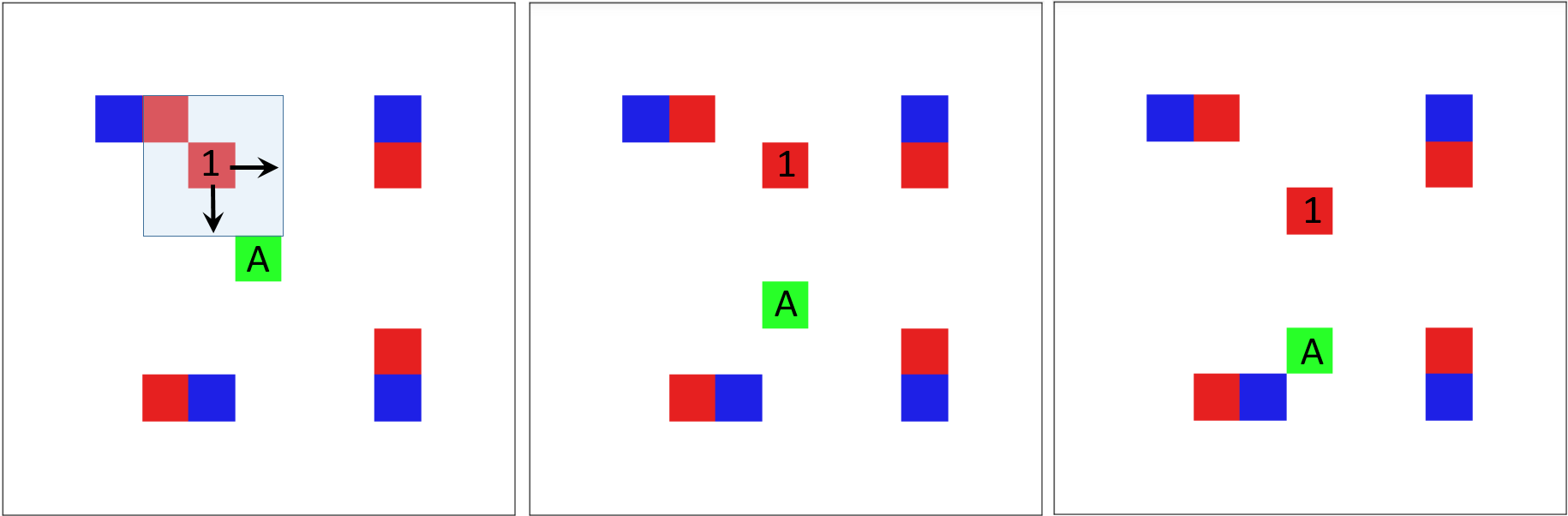}
		\caption{Deception}
		\label{fig:deception-animation}
	\end{subfigure}
	\caption{Illustration of typical trained behaviour in the three MARL scenarios, described in detail in the text of Section \ref{sec:result}.}
	\label{fig:animation}
\end{figure}

\noindent In the Battle scenario (Figure \ref{fig:battle-animation}), the home team's agents (red) attempt to maintain a straight line formation to avoid being surrounded and killed. Giving the agents flexibility to explore different types of attack and defence formations meant they could find a smart strategy for killing their opponents. The middle panel shows, Agents 1 and 2 moving toward the left so as to lure opposing Agents A and B into a trap and kill them. Again, this strategy is the result of considering the graph information of all neighbouring agents.\\

\noindent The strategy the home team developed in the Deception scenario was to try and occupy all landmarks so as to deceive the adversary (A). This behaviour is an improved version of the naive tactic that optimizes the short-term reward by ignoring non-target landmarks. Conventionally, the adversary can develop its own scavenging strategies to search for a landmark. The probability of a landmark that is not covered by home agents being a non-target is high in the naive training case. Therefore, the adversary is discouraged from occupying that landmark and continues to search for others. Since the home agents and adversary agent are trained simultaneously, any changes in the home team's policy lead to an improved adversary policy. This dilemma of co-evolving adversarial policies is effectively resolved by Q-MARL. Interestingly, we observed a special characteristic in the Deception scenario that was not present in Jungle and Battle: the smaller disconnected local sub-graphs were prominent during training. Each sub-graph represents a cluster of agents at one landmark. As a result, by means of NMP, all agents that discovered a specific landmark by chance, regardless of whether it was a target or not, were encouraged to occupy the landmark. This improved behaviour camouflaged the target landmark, because there was nothing distinctive for the adversary to discover.\\

\begin{figure}[]
	\begin{subfigure}{.3\textwidth}
		\centering
		\includegraphics[width=1\linewidth]{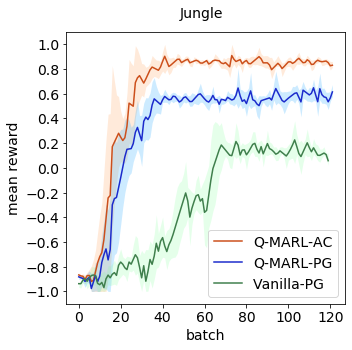}  
		\caption{Jungle}
		\label{fig:sub-first}
	\end{subfigure}
	\begin{subfigure}{.3\textwidth}
		\centering
		\includegraphics[width=1\linewidth]{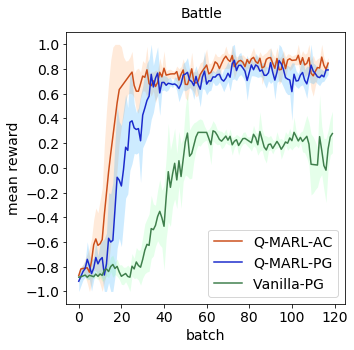}
		\caption{Battle}
		\label{fig:sub-second}
	\end{subfigure}
	\begin{subfigure}{.3\textwidth}
		\centering
		\includegraphics[width=1\linewidth]{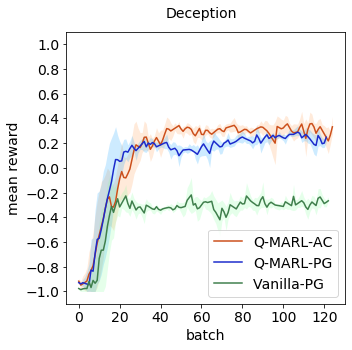}
		\caption{Deception}
		\label{fig:sub-third}
	\end{subfigure}
	\caption{Performance comparison of the mean training reward between Vanilla-PG with Q-MARL-AC and Q-MARL-PG.}
	\label{fig:result}
\end{figure}

\noindent The results for the different variants of Q-MARL compared to vanilla-PG is depicted in Figure \ref{fig:result}. In each sub-figure, the $x$-axis is the number of training batches, each consisting of 100 games, and the $y$-axis is the mean reward of the batch. In all games, there were distinct differences in performance between three algorithms. \\

\noindent In Jungle, Q-MARL-AC won 91.16\% of the games. Food cells are randomly placed in the environment for each game; hence, this scenario is able to let the algorithm stochastically improve the learning process, as the agents must develop their own strategies to search for food using only their local views, i.e., follow four walls clockwise until meeting another agent or meeting a food cell. This scenario therefore witnessed the peak performance for Q-MARL-AC ($90.94\%$), compared to Q-MARL-PG ($64.86\%$) and vanilla-PG ($22.66\%$). In Battle, Q-MARL variants outperformed vanilla-PG ($30.00\%$), although the gap between Q-MARL-PG ($87.13\%$) and Q-MARL-AC ($90.16\%$) was not as large as that in Jungle. Similarly, in Deception, Q-MARL-AC ($35.90\%$) performed slightly better than Q-MARL-PG ($28.91\%$), and both greatly surpassed vanilla-PG ($-21.87\%$).

\subsection{Comparison to Comtemporary Methods}\label{sec:compare}
\noindent In this section, we compare our results to observations in the literature from two perspectives: the theoretical strengths and weaknesses of the approaches' capability for problem solving, and the empirical performance numbers. Three recent works on graph-based RL make for suitable comparison as follows.\\

\begin{figure}[]
	\begin{subfigure}{1\textwidth}
		\centering
		\includegraphics[width=.95\linewidth]{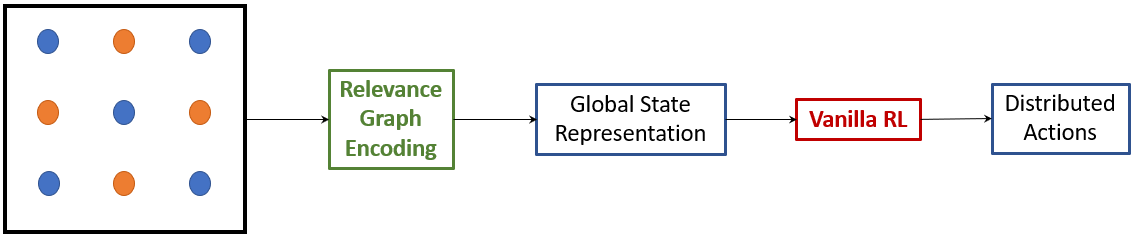}  
		\caption{Relevance Graph Embedding \cite{Malysheva18}}
		\label{fig:Malysheva}
		\hspace{2cm}
	\end{subfigure}
	
	\begin{subfigure}{1\textwidth}
		\centering
		\includegraphics[width=1\linewidth]{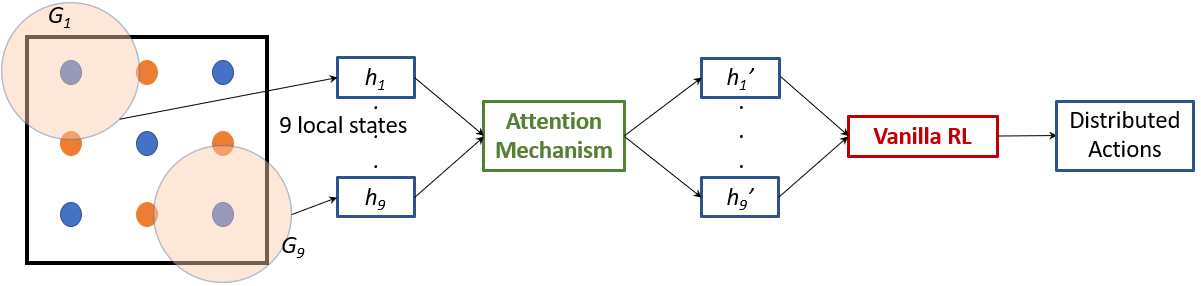}
		\caption{Single-head Attention and Multi-head Attention \cite{Agarwal19, Jiang20}}
		\label{fig:Agarwal}
		\hspace{2cm}
	\end{subfigure}
	\begin{subfigure}{1\textwidth}
		\centering
		\includegraphics[width=1\linewidth]{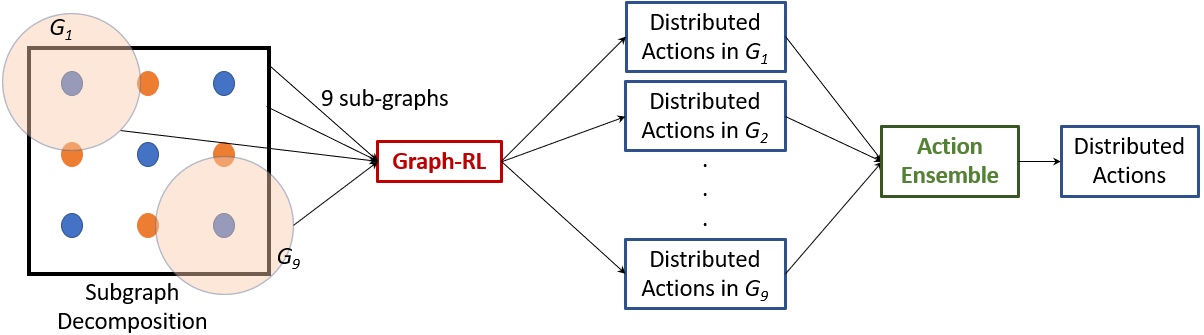}
		\caption{Our Work}
		\label{fig:Kha}
	\end{subfigure}
	\caption{High-level summary of related state-of-the-art studies in graph-based MARL}
	\label{fig:method_diagrams}
\end{figure}

\noindent $\bullet$ Relevance Graph Encoding (RGE) by Malysheva \textit{et al.} (2018) \cite{Malysheva18}: RGE produces \textit{relevance graphs} that describes the relationship between agents and environment
objects, given the relevance graph of the previous time step, and recent states/actions from all agents. At each specific time step, a new graph that represents the whole environment (with all agents) is retrieved and fed into the RL algorithm. There are two major disadvantages of this approach. First, because it uses information about all the agents in each time step, it does not scale larger envionments. In fact, the largest experimental scenario includes four agents. Second, it uses previous states and actions for learning, which violates the fundamental Markov property required for RL. A high-level overview of this approach is provided in Figure \ref{fig:Malysheva}. The distinctive contrast between our work and Malysheva's is that our method decomposes the full graph into sub-graphs and distributes actions to individuals as deliverables. This helps to generalise Q-MARL to larger environments and significantly reduces training burden. \\

\noindent $\bullet$ Single/Multi-head Attention by Jiang \textit{et al.} (2020, \cite{Jiang20}), Agarwal (2019, \cite{Agarwal19}): These two works rely on communications between agents in a local neighbourhood, as does Q-MARL. However, these works require agent indexing and a full adjacency matrix for the subsequent attention mechanism. Each agent extracts its own local state ($h_i$) then learns its dependency with neighbouring states via a convolutional network to produce \textit{better} individual states $h_i'$. The convolved states ($h_i'$) are then fed into a vanilla RL algorithm. Agarwal \cite{Agarwal19} proposed single-head attention. Jiang \cite{Jiang20} subsequently improved on this with a multi-head mechanism that simply concatenates several attentional heads before feeding them into a non-linearity function, based on a shallow multi-layer-perceptron (MLP). \\

\noindent The biggest difference between all these strategies and our approach, is the principle design for how neighbourhood information is learned. Instead of extracting local representative states for individual agents and improving upon them through an attention mechanism, we feed raw features from the perspective of any given agent into the graph network. Our method therefore offers more capacity for learning and, hence, better generalisation since it can yield more subtle dependencies across neighbouring agents. Further, each output sample from the graph network is a set of actions from all agents involved in a sub-graph. As the same agent may present in multiple sub-graphs, this leaves room for a subsequent action ensembling step for more robust action decisions. High-level summary diagrams of these studies as well as our work are provided in Figure \ref{fig:method_diagrams}.\\

\begin{table}[t]
	\centering
	\caption{Generalisation performance in the three scenarios Jungle, Battle, and Deception. Each method was trained with the number of agents indicated at the top of the left column. The trained policy was then applied to a larger scenario, as shown in the right column.}
	\begin{tabu}{Dm{2.3cm}|Dm{1.3cm}Dm{1.3cm}|Dm{1.3cm}Dm{1.3cm}|Dm{1.3cm}Dm{1.3cm}}
		\multirow{2}{*}{Method}
		& \multicolumn{2}{c|}{Jungle}  &  \multicolumn{2}{c|}{Battle} 
		& \multicolumn{2}{c}{Deception}   \\ & $N=4$ & $N=8$ & $N=14$ & $N=28$ & $N=1$& $N=2$  \\
		
		\tabucline[1pt]{-}
		
		\multirow{3}{*}{RGE \cite{Malysheva18}}
		&$0.952$&$0.667$& $0.733$& $0.681$ &$0.252$ &$0.210$\\ &$0.943$&$0.638$& $0.752$& $0.689$ &$0.278$ &$0.207$ \\
		&$0.951$&$0.614$& $0.710$& $0.694$ &$0.246$ &$0.222$  \\
		\tabucline[1pt]{-}
		
		\multirow{3}{*}{SHA \cite{Agarwal19}}
		&$0.861$&$0.746$& $0.817$& $0.699$ &$-0.029$ &$-0.294$\\ &$0.884$&$0.779$& $0.815$& $0.726$ &$-0.185$ &$-0.284$ \\
		&$0.893$&$0.753$& $0.818$& $0.708$ &$0.006$ &$-0.188$  \\
		\tabucline[1pt]{-}
		
		\multirow{3}{*}{MHA \cite{Jiang20}}
		&$0.905$&$0.780$& $0.894$& $0.716$ &$0.107$ &$-0.113$\\ &$0.897$&$0.779$& $0.870$& $0.704$ &$0.091$ &$-0.091$ \\
		&$0.897$&$0.777$& $0.885$& $0.729$ &$0.152$ &$-0.087$ \\
		\tabucline[1pt]{-}
		
		\multirow{3}{*}{Q-MARL}
		&$0.925$&$0.911$& $0.955$& $0.848$ &$-0.163$ &$-0.196$\\ &$0.930$&$0.917$& $0.960$& $0.863$ &$-0.144$ &$-0.164$\\
		&$0.922$&$0.926$& $0.960$& $0.849$ &$-0.127$ &$-0.155$ \\
		\tabucline[1pt]{-}
		
	\end{tabu}
	\label{table:gridsearch}
\end{table}

\begin{figure}[]
	\begin{subfigure}{1\textwidth}
		\centering
		\includegraphics[width=1\linewidth]{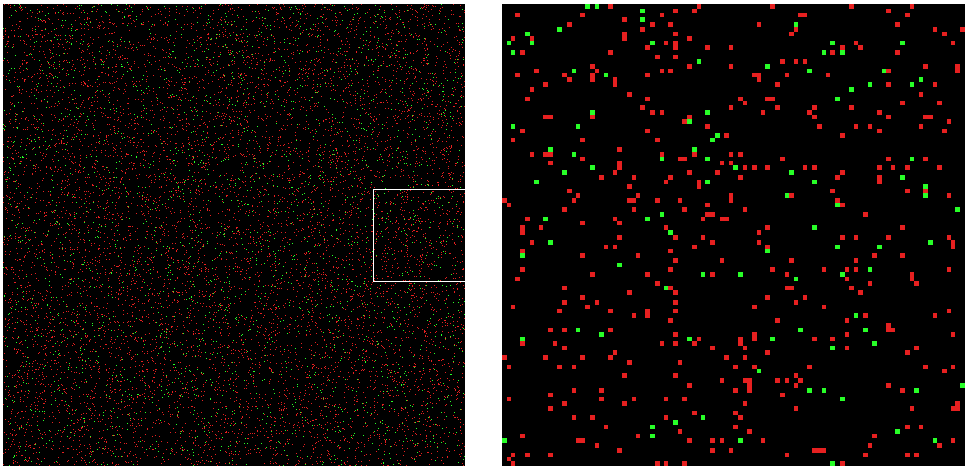}
		\caption{Full initial environment (left) and the zoomed region (right) corresponding to the white square box in the left figure.}
		\label{fig:jungle_init_big}
		\hspace{2cm}
	\end{subfigure}
	\begin{subfigure}{1\textwidth}
		\centering
		\includegraphics[width=1\linewidth]{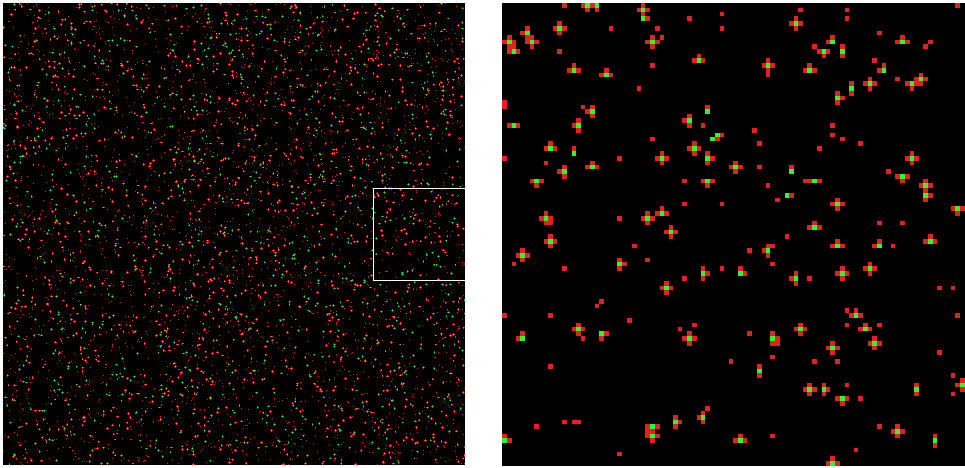}
		\caption{Full environment after 1000 time steps performed by a trained model (left) and the zoomed region (right) corresponding to the white square box in the left figure.}
		\label{fig:jungle_trained_big}
	\end{subfigure}
	\caption{An illustration of a big scale Jungle scenario with 10,000 agents (red dots) and 2500 foods (green dots).}
	\label{fig:jungle_big}
\end{figure}

\noindent In terms of empirical performance, we assessed how well each algorithm generalised to a larger environment and the training speed. As shown in Table 1, each scenario was trained with a small number of agents, then tested with double that number. Known as \textit{curriculum learning} \cite{bengio09}, this is an effective method for evaluating the generalisation of an algorithm.\\

\noindent Our observations of the results follow:

\begin{itemize}
\item It is evident with all methods that performance suffered when moving from the smaller training environment to the larger testing environment. 
\item In the Jungle scenario, Q-MARL showed the best generalisation ability, despite RGE received the highest training rewards. The explanation is that a full graph embedding better captures the global information than a series of sub-graphs - an approach that is only feasible in small scenarios. 
\item In the Battle scenarios, Q-MARL amassed the most rewards in both scenario sizes.
\item Q-MARL failed miserably at the Deception scenario, accumulating the least rewards of all during training and performing only marginally better than SHA in testing. We attribute its inadequacy to the underlying nature of the Deception task, where separate sub-graphs are unable to spread agents to separate landmarks. 
\end{itemize}

\noindent To evaluate training speed, we tested all methods in each of the different scenarios with 10, 100, 1000, and 10000 agents. Figure \ref{fig:jungle_big} shows an example from the Jungle scenario and Figure \ref{fig:perf} shows the time (in seconds) required to update 100 episodes for each setting. From the result, we find that at $N>100$ agents, the computational burden exploded for RGE, while all graph-based approaches (SHA, MHA, Q-MARL) remained feasible. Strikingly, as the number of agents increased, Q-MARL performed better and better as compared to SHA and MHA. This is because the graph decomposition procedure prunes substantially more unnecessary information about agents that are far away as the number of entities grows. 

\begin{figure}
	\includegraphics[width=.8\linewidth]{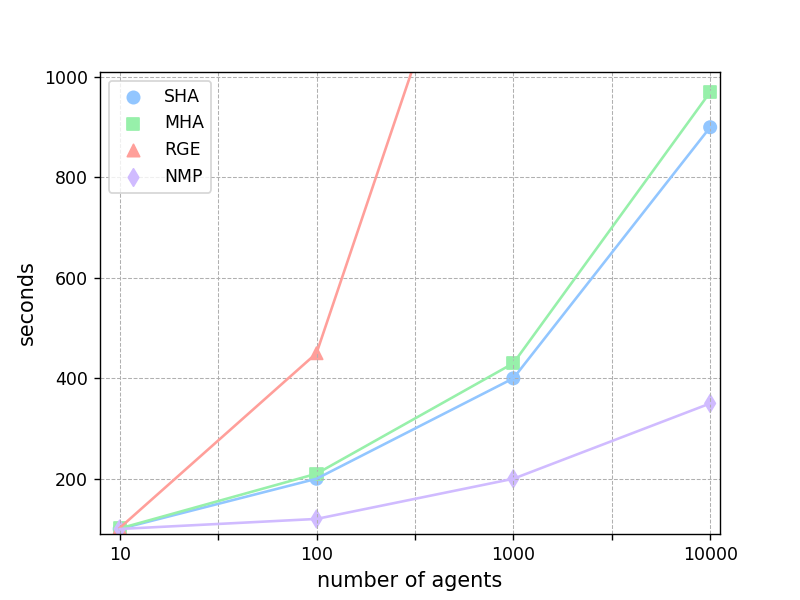}
	\caption{Training speed comparison. The horizontal axis represents the number of agents (10, 100, 1,000, and 10,000 agents). The vertical axis indicates the time (in seconds) required to update 100 episodes, averaged on three scenarios (Jungle, Battle, Deception).}
	\label{fig:perf}
\end{figure}

\section{Conclusion}\label{sec:conclusion}
\noindent The graph-based approach to MARL presented in this paper addresses the long-standing issue of scalability with this paradigm. At each time step, the full environment is decomposed into multiple sub-graphs, each consisting of a limited number of agents that form a local neighbourhood. Conventional RL algorithms, such as policy gradient \cite{pg} or actor-critic \cite{AC}, learn by using decomposed sub-graphs as training samples. The graph decomposition and architecture design were inspired by state-of-the-arts from quantum chemistry. Experiments with typical RL scenarios support our theoretical proof and illustrate that our graph-based approach can substantially reduces training time and training loss in terms of performance, and can also significantly better generalise when transitioning from a small curriculum scenario to a larger one. Q-MARL repository is available for download at https://github.com/cibciuts/NMP\_MARL.

\section*{Acknowledgement}
\noindent This work was supported in part by the Australian Research Council (ARC) under discovery grant DP220100803 and DP250103612 and ITRH grant IH240100016. Research was also sponsored in part by the US Office of Naval Research Global under Cooperative Agreement Number ONRG - NICOP - N62909-19-1-2058. We would also like to thank the anonymous reviewers for their constructive comments on this work.

\nocite{*}
\bibliographystyle{plain}
\bibliography{\jobname}

\end{document}